\documentclass{article}

\usepackage{microtype}
\usepackage{graphicx}
\usepackage{multirow}
\usepackage{subcaption}
\usepackage{booktabs} 
\usepackage{tikz} 
\usetikzlibrary{positioning}

\usepackage{hyperref}
\usepackage{makecell}

\usepackage[accepted]{icml2026}

\usepackage{amsmath}
\usepackage{amssymb}
\usepackage{mathtools}
\usepackage{amsthm}
\usepackage[export]{adjustbox}
\usepackage[table]{xcolor}
\definecolor{bestgreen}{RGB}{198,239,206}
\definecolor{lightred}{RGB}{255,199,206}
\newcommand{\tightcolorbox}[2]{\begingroup
\setlength{\fboxsep}{1pt}\colorbox{#1}{#2}\endgroup}

\usepackage[section]{placeins}

\usepackage[capitalize,noabbrev]{cleveref}

\theoremstyle{plain}
\newtheorem{theorem}{Theorem}[section]
\newtheorem{proposition}[theorem]{Proposition}

\newtheorem{corollary}[theorem]{Corollary}
\theoremstyle{definition}
\newtheorem{definition}[theorem]{Definition}

\theoremstyle{remark}

\definecolor{deblue}{RGB}{11,132,147}
\definecolor{ocra}{RGB}{204, 119, 34}
\usepackage{enumitem}
\usepackage{tikz}
\newcommand{\fcircle}[2][red,fill=red]{\tikz[baseline=-0.5ex]\draw[#1,radius=#2] (0,0.03) circle ;}

\usepackage[textsize=tiny]{todonotes}

\icmltitlerunning{The Forgetting Hypothesis in Deep Operator Learning}

\begin{document}

\twocolumn[
  \icmltitle{Do Neural Operators Forget Geometry? \\The Forgetting Hypothesis in Deep Operator Learning}



  \icmlsetsymbol{equal}{*}

  \begin{icmlauthorlist}
    \icmlauthor{Yanming Xia}{yyy}
    \icmlauthor{Angelica I. Aviles-Rivero}{comp}
  \end{icmlauthorlist}

  \icmlaffiliation{yyy}{Qiuzhen College, Tsinghua University, Beijing, China}
  \icmlaffiliation{comp}{Yau Mathematical Sciences Center, Tsinghua University, Beijing, China}

  \icmlcorrespondingauthor{Yanming Xia}{xiaym23@mails.tsinghua.edu.cn}
  \icmlcorrespondingauthor{Angelica I. Aviles-Rivero}{aviles-rivero@tsinghua.edu.cn.}

  \icmlkeywords{Machine Learning, ICML}

  \vskip 0.3in
]



\printAffiliationsAndNotice{}  
%

\begin{abstract}
Neural operators perform well on structured domains, yet their behaviour on irregular geometries remains poorly understood. We show that this limitation is not merely an encoding issue, but a depth-wise failure mode inherent to deep operator architectures.
We formalise the \textit{Geometric Forgetting Hypothesis}: due to the Markovian structure of operator layers and their reliance on global mixing mechanisms, neural operators progressively lose access to domain geometry as depth increases. Using layer-wise geometric probing, we demonstrate that both spectral and attention-based operators systematically lose geometric fidelity.
We show that this geometric forgetting degrades accuracy, stability, and generalisation. To counteract it, we introduce a lightweight \textit{geometry memory injection} mechanism that restores geometric constraints at intermediate depths with minimal architectural overhead. This simple intervention consistently mitigates forgetting and exposes a \textit{geometric shortcut} instability in transformer-based operators, revealing that geometric retention is a structural requirement rather than a design choice.
\end{abstract}

\section{Introduction}Partial Differential Equations (PDEs) are fundamental to modeling complex physical systems, from aerodynamics to thermodynamics \cite{pdeuseful_book}. While traditional numerical solvers offer high accuracy, their computational cost is often prohibitive \cite{traditoinalsolvertimeconsume_paper}. Neural Operators \cite{deeponet_paper,ufno_paper,ffno_paper,neuralop_cite2,pino_paper,neuralop_cite1} have emerged as a paradigm-shifting alternative, learning mappings between infinite-dimensional function spaces to enable rapid, zero-shot evaluation.

Despite this progress, a major challenge remains: robust generalisation to irregular geometries. Real-world domains are rarely structured, yet most neural operator architectures implicitly assume regular grids. Recent works address this through increasingly sophisticated geometry handling strategies, including coordinate transformations \cite{geofno_paper, taylor2025diffeomorphic, optimaltransport_paper}, graph-based operators on unstructured meshes \cite{gino_paper,bryutkin2024hamlet, li2025graph}, and Transformer-based models that encode geometry into tokens or continuous fields \cite{transolver_paper, cvit_paper}.
A common element across these approaches is the repeated
\emph{conditioning} of hidden layers on geometric information.

In this work, we provide a principled explanation for why such geometry conditioning is necessary. We identify a structural phenomenon in neural operators, which we term the \textbf{Geometric Forgetting Hypothesis}. We show that the global mixing mechanisms underlying spectral and attention-based operators (FFT in FNO \cite{fno_paper}, self-attention \cite{attentionallyouneed_paper}) induce a progressive loss of geometric information as representations propagate through depth. This behaviour follows directly from the Data Processing Inequality applied to the Markovian structure of operator layers, causing deep representations to approach statistical independence from the domain geometry unless geometry is explicitly
reintroduced.

We validate this hypothesis through layer-wise geometric probing and
spectral analysis, revealing systematic geometric information decay in both spectral and attention-based backbones. Our analysis further
uncovers a instability in Transformer-based
operators, which we term the \textbf{Geometric Shortcut} \cite{shortcut_paper,shortcut2_paper}: under late
geometry injection, the optimiser bypasses the operator backbone,
leading to feature collapse. This explains the sensitivity of
Transformer operators to injection depth and shows that early geometric integration is a structural requirement rather than an architectural choice.

\paragraph{Contributions.} This work provides \textit{the first principled analysis} of how geometry propagates through deep neural operators and reveals a previously unnoticed structural limitation of standard architectures. Notably, we highlight:

\fcircle[fill=deblue]{2pt} We identify and formalise a new phenomenon, termed \textbf{Geometric Forgetting}: an information-theoretic consequence of the Markovian structure of neural operator layers, whereby global mixing operations progressively erase geometric information across depth.

\fcircle[fill=deblue]{2pt} We introduce practical \textbf{diagnostic probes} that quantify this layer-wise geometric information decay and empirically validate the phenomenon across spectral and attention-based operators, revealing a instability in Transformer-based models that we term the \textbf{Geometric Shortcut}.

\fcircle[fill=deblue]{2pt} We propose a minimal, architecture-agnostic \textbf{Geometry Memory Injection} mechanism that restores geometric information flow, and through a control study with the Laplace Neural Operator, distinguish between \textbf{intrinsic} and \textbf{extrinsic} geometric memory.

\section{Related Work}
\textbf{Neural Operators.}
Neural Operators have emerged as a powerful paradigm for learning solution operators of PDEs
\cite{ufno_paper,ffno_paper,pino_paper,geofno_paper,transolver_paper,cvit_paper}.
The Fourier Neural Operator (FNO)~\cite{fno_paper} pioneered this direction by parameterising
the integral kernel in the Fourier spectral domain, enabling efficient global convolution with
quasi-linear complexity. Numerous variants have since extended this framework
\cite{zhao2022incremental,guibas2022adaptive,ffno_paper}.
A complementary direction is the Laplace Neural Operator (LNO)~\cite{lno_paper},
which replaces the Fourier transform with a Laplace transform and explicitly encodes
initial conditions through a pole–residue formulation, effectively incorporating boundary information into its parametrization.
More recently, Transformer-based neural operators such as Transolver~\cite{transolver_paper},
Transolver++~\cite{transolverpp_paper}, and CViT~\cite{cvit_paper} lift inputs into
geometry-aware tokens or continuous neural fields and employ global self-attention
to model long-range physical interactions.
Despite these architectural differences, a common trait of spectral and attention-based
operators is the reliance on global mixing mechanisms that do not explicitly constrain
how geometric information propagates through depth. Existing works primarily focus on
how geometry is \emph{encoded at the input}, leaving unexplored how geometry is
\emph{retained} across successive operator layers.

\textbf{Handling Irregular Geometries.}
Adapting neural operators to complex, non-rectangular domains remains a primary challenge. Current approaches largely focus on the encoding stage \cite{siren_paper}. One line of work employs coordinate transformations, such as Geo-FNO \cite{geofno_paper}, diffeomorphism-based methods \cite{diffeono_paper} or optimal transport \cite{optimaltransport_paper}, to map irregular physical domains into regular latent grids where standard FFTs can be applied. Another approach integrates graph-based methods, as seen in GNO \cite{gno_paper}, GINO \cite{gino_paper} or use graph neural operator \cite{sp2gno_paper,sp2gno2_paper}, to handle unstructured meshes directly before projecting to a latent space. 
While some of these advanced architectures implicitly integrate geometric information into deeper layers, these design choices remain largely heuristic. Existing literature treats this deep injection merely as a means of increasing model capacity or context.
The fundamental question of \emph{why} continuous geometric conditioning is necessary
remains unexamined.

\textbf{Memory Effects and Forgetting.} Memory effects in physics-informed modelling traditionally refer to temporal history dependence. This has been rigorously studied through the Mori--Zwanzig formalism for modelling unresolved scales in dynamical
systems~\cite{zwanzig1961memory,mori1965transport,chorin2000optimal,stinis2012mori}.
More recently, long-range temporal and spatial dependencies in sequence modelling have been addressed through State Space Models (SSMs) that
explicitly encode memory into the architecture~\cite{gu2022ssm,gu2023mamba,onmemorybenefit_paper}.
A closely related phenomenon appears in deep learning theory. It has been shown that repeated message passing in recurrent and graph architectures
leads to \emph{information collapse} and \emph{over-smoothing}, where high-frequency components of the signal are progressively attenuated and hidden representations converge to low-dimensional subspaces
\cite{rnnforget_paper,oono2020graph}.

\textbf{Our Work: Beyond geometry encoding. } In contrast to prior work that focuses on how geometry is \textit{encoded}, we investigate whether neural operators \textit{retain} geometric information once it enters the network. We show that, due to their Markovian structure and reliance on global mixing layers, standard neural operators are structurally predisposed to progressively lose access to geometry with depth. From this perspective, geometry conditioning is not a heuristic design choice, but a necessary mechanism to counteract an inherent information dissipation within the architecture.




\section{Methodology}
We now formalise the learning setting and introduce the structural phenomenon that motivates our approach, namely the progressive loss of geometric information across depth in standard neural operators.

\begin{figure*}[t!]
  \includegraphics[width=1\textwidth]{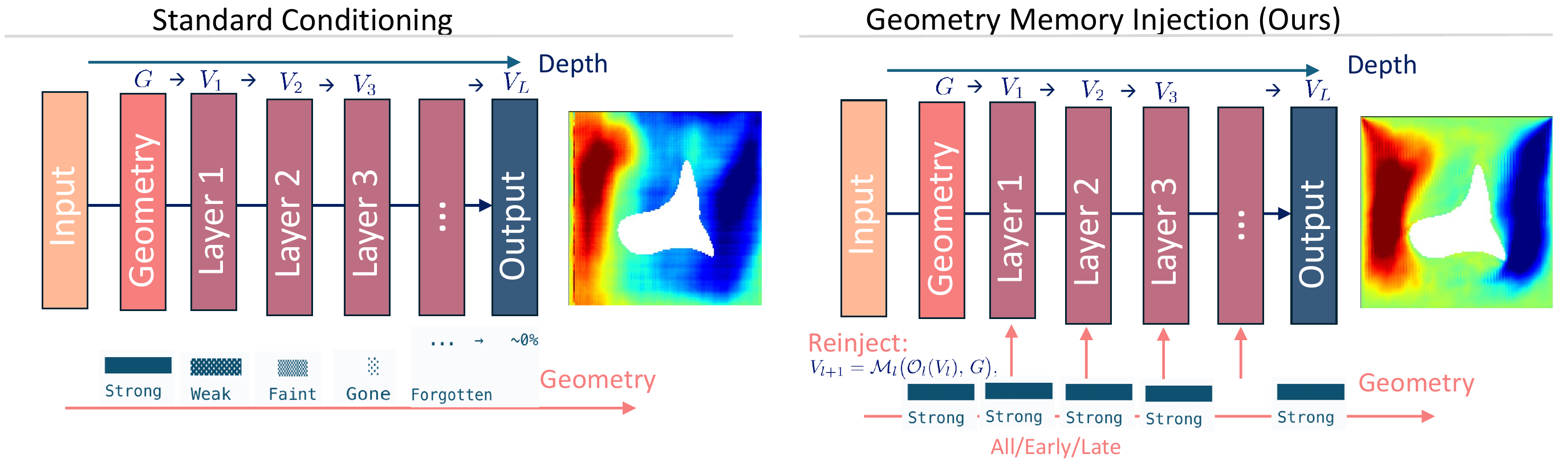}
  \caption{
 \textbf{Geometric forgetting as an architectural information-flow phenomenon.}
Left: In standard neural operators, geometry $G$ is only provided at the input. 
The hidden states $\{V_l\}_{l=1}^L$ form a Markov chain
$G \to V_1 \to \dots \to V_L$, implying that geometric information cannot increase with depth. 
In global mixing layers (FFT, self-attention), this information decays severely, leading to \emph{geometric forgetting}. 
Right: Geometry memory injection breaks this Markov structure by explicitly 
reintroducing $G$ into intermediate layers through
$V_{l+1}=\mathcal{M}_l(\mathcal{O}_l(V_l),G)$,
preserving geometric information across depth.
  }
 \label{fig:FIG_TEASER}
\end{figure*}

\subsection{Problem Statement}
\label{subsec:preliminaries}

Let $(\mathcal{D}, \mathcal{A}, \mathcal{U})$ be a probability space. A \emph{physical domain} is a random compact subset $D \subset \mathbb{R}^d$ with Lipschitz boundary $\partial D$, drawn from a distribution $\mathcal{D}$. We consider a parametric PDE defined on a realisation $D$:

\begin{equation}
\begin{cases}
\mathcal{L}_{a}(u)(x) = 0, & x \in D,\\
\mathcal{B}(u)(x) = 0, & x \in \partial D,
\end{cases}
\label{eq:pde}
\end{equation}
where $a \in \mathcal{A} \subset L^2(\mathbb{R}^d; \mathbb{R}^{d_a})$ is a parameter field, $\mathcal{L}_a$ is a differential operator, and $\mathcal{B}$ is a boundary operator. We assume \eqref{eq:pde} admits a unique solution $u \in \mathcal{U} \subset L^2(D; \mathbb{R}^{d_u})$ for each $(a, D)$. The solution operator is $\Psi: \mathcal{A} \times \mathfrak{D} \to \mathcal{U}$, where $\mathfrak{D}$ is the space of admissible domains.

We seek to learn an approximation $\Psi_\theta \approx \Psi$ using a neural operator, with a focus on \emph{geometric generalisation}: accurate prediction on domains $D_{\text{test}} \sim \mathcal{D}'$ distinct from those seen during training $D_{\text{train}} \sim \mathcal{D}$.
\subsection{Neural Operator as a Markov Chain and Geometric Forgetting}
\label{subsec:forgetting_hypothesis}

Let $\mathcal{X} \subset \mathbb{R}^d$ be a fixed, bounded reference set
(e.g.\ a bounding box containing all domains).
For a domain $D$, we define a \emph{geometry encoding}
as a measurable function
\begin{equation}
G_D: \mathcal{X} \to \mathbb{R}^{n_g},
\qquad
G_D \in \mathcal{G},
\label{eq:geo_encoding}
\end{equation}
where $\mathcal{G}$ is a Banach space of encoding functions.

Typical representatives of $\mathcal{G}$ include indicator masks
$G^{\mathrm{mask}}_D(x)=\mathbf{1}_D(x)$,
signed distance fields
$G^{\mathrm{sdf}}_D(x)=(1-2\cdot\mathbf{1}_D(x))\cdot\mathrm{dist}(x,\partial D)$,
coordinate embeddings $G^{\mathrm{coord}}_D(x)=x$,
or their concatenation
$G_D = (G^{\mathrm{mask}}_D, G^{\mathrm{sdf}}_D, G^{\mathrm{coord}}_D)$,
which is the encoding used in our experiments.

The random variable $G := G_D$, where $D \sim \mathcal{D}$,
represents the geometric information available to the model.
A neural operator consists of $L$ layers.
Let $V_0 = \eta(a,G)$ be an initial lifting.
For $l=0,\dots,L-1$,
\begin{equation}
V_{l+1} = \mathcal{O}_l(V_l),
\qquad
\mathcal{O}_l : \mathcal{H}_l \to \mathcal{H}_{l+1}.
\label{eq:layer_update}
\end{equation}

Since each layer $\mathcal{O}_l$ does not depend explicitly on $G$,
for every $l$ we have the Markov chain
$
G \to V_l \to V_{l+1}.
$

\begin{proposition}[Information non-increase through depth]
\label{prop:info_nonincrease}
Under the update rule \eqref{eq:layer_update},
\[
I(G; V_{l+1}) \le I(G; V_l),
\qquad l=0,\dots,L-1,
\]
where $I(\cdot;\cdot)$ denotes mutual information.
\end{proposition}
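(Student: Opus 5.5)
The plan is to recognise Proposition~\ref{prop:info_nonincrease} as a direct instance of the Data Processing Inequality (DPI) applied to the Markov chain $G \to V_l \to V_{l+1}$ noted just before the statement. First I will make the Markov property precise. Since $V_{l+1} = \mathcal{O}_l(V_l)$ with $\mathcal{O}_l$ a fixed measurable map that does not take $G$ as an argument, $V_{l+1}$ is $\sigma(V_l)$-measurable. Hence for any measurable set $B$,
\[
\Pr(V_{l+1}\in B \mid V_l, G) = \mathbf{1}\{\mathcal{O}_l(V_l)\in B\} = \Pr(V_{l+1}\in B \mid V_l).
\]
This is exactly the statement that $G$ and $V_{l+1}$ are conditionally independent given $V_l$. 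If the layers are stochastic (e.g.\ dropout), I will instead model $\mathcal{O}_l$ as a Markov kernel whose auxiliary randomness is independent of $(G,V_l)$, and the same identity holds.

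Second, I will apply the chain rule for mutual information to $I(G; V_l, V_{l+1})$ in two ways:
\[
I(G;V_l,V_{l+1}) = I(G;V_{l+1}) + I(G;V_l\mid V_{l+1}) = I(G;V_l) + I(G;V_{l+1}\mid V_l).
\]
Conditional independence gives $I(G;V_{l+1}\mid V_l)=0$. Nonnegativity of conditional mutual information then gives $I(G;V_{l+1}) \le I(G;V_l)$. Iterating over $l=0,\dots,L-1$ yields the full monotone sequence $I(G;V_0)\ge I(G;V_1)\ge\dots\ge I(G;V_L)$, if desired.

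The main obstacle is measure-theoretic rather than conceptual. $G$ and $V_l$ take values in infinite-dimensional spaces ($\mathcal{G}$ a Banach space, $\mathcal{H}_l$ function spaces), so mutual information must be defined in general. I will use the Gelfand--Yaglom--Pinsker definition
\[
I(X;Y)=\sup_{\text{finite partitions}} I([X];[Y]),
\]
equivalently $D_{\mathrm{KL}}(P_{XY}\,\|\,P_X\otimes P_Y)$. I will assume $\mathcal{G}$ and $\mathcal{H}_l$ are standard Borel spaces (separable Banach/Hilbert spaces suffice), so that regular conditional distributions exist. The chain rule and nonnegativity of conditional mutual information then hold in this generality; this is Kolmogorov's/Dobrushin's theory.

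If I want to avoid the chain rule entirely, I will argue directly with partitions instead. Any finite measurable partition $\{B_j\}$ of $\mathcal{H}_{l+1}$ pulls back under $\mathcal{O}_l$ to the finite measurable partition $\{\mathcal{O}_l^{-1}(B_j)\}$ of $\mathcal{H}_l$. So each finite quantisation of $(G,V_{l+1})$ is a finite quantisation of $(G,V_l)$, and the supremum defining $I(G;V_{l+1})$ is over a subset of the partitions used for $I(G;V_l)$. This gives the inequality immediately in the deterministic case, including when either side is infinite.
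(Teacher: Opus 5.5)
Your proposal is correct and follows the paper's proof. The paper likewise notes that $V_{l+1}=\mathcal{O}_l(V_l)$ is a measurable function of $V_l$ alone, concludes that $G \to V_l \to V_{l+1}$ is a Markov chain, and invokes the Data Processing Inequality; your chain-rule derivation, the Gelfand--Yaglom--Pinsker treatment of infinite-dimensional $G$ and $V_l$, and the partition pull-back argument just supply the details the paper leaves implicit, including the case of infinite mutual information.
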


\begin{proof}
Because $V_{l+1}$ is a measurable function of $V_l$ alone,
it is conditionally independent of $G$ given $V_l$.
Hence $G \to V_l \to V_{l+1}$ is a Markov chain,
and the result follows from the Data Processing Inequality.
\end{proof}

\begin{corollary}[Cumulative information decay]
\label{cor:cumulative_decay}
By induction using Proposition~\ref{prop:info_nonincrease},
\[
I(G; V_L) \le I(G; V_{L-1}) \le \dots \le I(G; V_0).
\]
\end{corollary}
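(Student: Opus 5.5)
The plan is a finite induction on the depth index, using Proposition~\ref{prop:info_nonincrease} as the only ingredient. For each $k\in\{0,\dots,L\}$ let $P(k)$ be the statement
\[
I(G;V_k)\le I(G;V_{k-1})\le\dots\le I(G;V_0).
\]
The base case $P(0)$ only asserts $I(G;V_0)\le I(G;V_0)$. This is trivially true provided $I(G;V_0)$ is a well-defined element of $[0,\infty]$. It is, because $V_0=\eta(a,G)$ is a measurable function of the jointly distributed pair $(a,G)$.

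For the inductive step, assume $P(l)$ holds for some $l\le L-1$. Proposition~\ref{prop:info_nonincrease} applied at index $l$ gives $I(G;V_{l+1})\le I(G;V_l)$. Prepending this inequality to the chain in $P(l)$ yields $P(l+1)$. The conclusion of the corollary is exactly $P(L)$.

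The one point to check is that Proposition~\ref{prop:info_nonincrease} really applies at every index $l$, so that no conditioning on $G$ enters at intermediate depths. This holds because each $\mathcal{O}_l$ is a fixed measurable map that does not take $G$ as an argument. Consequently $V_{l+1}=\mathcal{O}_l(V_l)$, and more generally
\[
V_{l+1}=\mathcal{O}_l\circ\dots\circ\mathcal{O}_0(V_0),
\]
is a deterministic function of $V_l$ alone. This gives the conditional independence needed for the Markov chain $G\to V_l\to V_{l+1}$ used in Proposition~\ref{prop:info_nonincrease}.

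The only mild subtlety is that mutual information may be infinite, for instance for continuous function-valued $V_l$. Here no subtraction is performed, only comparisons in $[0,\infty]$. Transitivity of $\le$ on the extended reals therefore suffices, and the argument goes through unchanged.
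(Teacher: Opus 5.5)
Your proof is correct and is exactly the paper's argument: a finite induction that chains the one-step inequalities $I(G;V_{l+1})\le I(G;V_l)$ from Proposition~\ref{prop:info_nonincrease} for $l=0,\dots,L-1$, using transitivity of $\le$ on $[0,\infty]$. One slip is harmless: the composition $\mathcal{O}_l\circ\dots\circ\mathcal{O}_0(V_0)$ shows that $V_{l+1}$ is a function of $V_0$, not of $V_l$, but you only need $V_{l+1}=\mathcal{O}_l(V_l)$ for the Markov property, so the argument is unaffected.
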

Corollary~\ref{cor:cumulative_decay} reveals a structural limitation
of standard neural operators: unless geometry is explicitly reintroduced
at intermediate layers, the architecture itself enforces progressive
loss of geometric information. In particular, deep stacks of global
mixing layers inevitably drive the representation towards statistical
independence from the domain geometry. We refer to this phenomenon as
\emph{geometric forgetting}.


\subsection{Quantifying Forgetting: Diagnostic Measures}
\label{subsec:diagnostics}

Since direct computation of $I(G;V_l)$ is intractable,
we introduce two measurable proxies. They detect two different kinds of geometric forgetting. 

\begin{definition}[Layer-wise geometric fidelity]
Let $D_l: \mathcal{H}_l \to L^2(\mathcal{X})$
be a trained auxiliary decoder estimating
$\mathbb{E}[\mathbf{1}_D \mid V_l]$, i.e. reconstructing indicator mask $G_D^{\text{mask}}$ from latent feature $V_l$.
Define
\[
\epsilon_l :=
\mathbb{E}\|D_l(V_l) - G_D^{\text{mask}}\|_{L^2}^2.
\]
An increasing sequence $\{\epsilon_l\}$ provides empirical
evidence consistent with geometric information loss. We termed this as \textit{macroscopic} geometric information loss. 
\end{definition}

\begin{definition}[Spectral power distribution of geometric features]
Let $\mathcal{F}$ denote the Fourier transform and define
\[
\rho_l(\kappa):=
\frac{\sum_{\kappa\leq |k|\leq\kappa+1}|\mathcal{F}[v_l](k)|}
     {\sum_k|\mathcal{F}[v_l](k)|}.
\]
We interpret the spectral power distribution as a proxy for \textit{boundary} information loss. By the properties of the Fourier transform, spatial features of characteristic length scale $\lambda$ correspond to spectral energy concentrated around wavenumber $k=S/(2\lambda)$ where $S$ is resolution. Therefore, frequency $k=S/2$ captures the `vertical' step of the discretized binary mask while frequency $k=S/4,S/8$ correspond to more smoothed staircase approximations like $G_D^{\text{sdf}}$ with wavelengths of 4 and 8 pixels.
\end{definition}

\textbf{LNO can model boundary effect. }
A periodic signal $u(t)=\sum_{l=-\infty}^\infty\alpha_le^{i\omega_lt},t\sim t+T$ processed after Laplace convolution yields \cite{lno_paper}:
\begin{align}
u_1(t)=\sum_{n=1}^N\gamma_n e^{\mu_nt}+\sum_{l=-\infty}^\infty\lambda_le^{i\omega_lt}
\end{align}
Thus, Laplace convolution can represent transient dynamics, which translate to boundary effect in geometry related problems. We conjecture that this effective modeling of boundary effect renders LNO \textit{implicit memory} that also help the retention of geometry information. In contrast, Fourier convolution can only represent periodic signals, effectively `washing out' boundary effect even macroscopic geometry information is preserved.

\subsection{Geometry Memory Injection: A Compensatory Mechanism}
\label{subsec:memory_injection}

To counteract geometric forgetting, we introduce an external
\emph{geometry memory pathway} that allows each layer to depend
explicitly on $G$.

We modify the layer update to
\begin{equation}
V_{l+1} = \mathcal{M}_l\bigl(\mathcal{O}_l(V_l),\, G\bigr),
\label{eq:memory_update}
\end{equation}
where $\mathcal{M}_l: \mathcal{H}_{l+1} \times \mathcal{G} \to \mathcal{H}_{l+1}$
is a memory integration operator.

This update violates the Markov condition
$G \to V_l \to V_{l+1}$,
allowing geometric information to be re-injected at arbitrary depths. 

\textbf{Integration operators. }
We study three principal types of $\mathcal{M}_l$:
\begin{enumerate}
    \item \textbf{Affine Modulation (FiLM)~\cite{film_paper}:}
          $\mathcal{M}_l(z, m) = \gamma_l(m)\odot z + \beta_l(m)$, where $\beta_l,\gamma_l$ are geometry encoders and $\odot$ is Hadamard product,
    \item \textbf{Additive injection:}
          $\mathcal{M}_l(z, m) = z + W_l m$,
    \item \textbf{Concatenation:}
          $\mathcal{M}_l(z, m) = \operatorname{Proj}_l([z;m])$.
\end{enumerate}

\begin{definition}[Injection policies]
Let $S \in \{0,1\}^L$ be the injection policy vector,
where $S_l$ determines whether memory is injected at layer $l$.
We consider four structured policies:
\emph{Early injection}, defined by $S_l = 1$ for
$l \le \lfloor L/2 \rfloor$;
\emph{Late injection}, defined by $S_l = 1$ for
$l > \lfloor L/2 \rfloor$;
\emph{Single-layer injection}, defined by $S_k = 1$
for a fixed $k \in \{0,\dots,L-1\}$;
and \emph{Full injection}, where $S_l \equiv 1$ for all $l$.
\end{definition}

\section{Experimental Results}
We now provide extensive empirical evidence validating the proposed framework across multiple settings.

\subsection{Dataset Description}\label{datasets}
We evaluate our framework on three diverse benchmarks designed to stress-test geometric generalisation in fluid dynamics. FlowBench \cite{flowbench_paper} serves as our primary testbed for zero-shot geometric adaptation, where we used Lid Driven Cavity subset governed either purely by the Navier-Stokes equations (\textbf{LDC-NS}) or by heat transfer in addition to fluid dynamics (\textbf{LDC-NSHT}). Critically, we enforce a generalization test by training on one geometry category and evaluating on an entirely distinct, unseen geometry category, assessing the model's ability to extrapolate rather than memorize. This is complemented by \textbf{AirfRANS} \cite{airfrans_paper}, which models external Reynolds-Averaged Navier-Stokes (RANS) aerodynamics over varied NACA airfoils , and \textbf{Darcy} (adpated from \citealp{diffeono_paper}), which evaluates geometrical robustness by training on pentagonal and testing on hexagonal and octagonal domains. An illustration of our train-validation-test split for Flowbench and Darcy dataset is shown in Table \ref{train_test_split}. 

\begin{table}[H]
\centering
\caption{The train-validation-test split in our experiments. Figures of Flowbench dataset is adopted from \cite{flowbench_paper} and figures of Darcy dataset is adopted from \cite{diffeono_paper}}
\resizebox{1\columnwidth}{!}{
\begin{tabular}{ccc}
\toprule
Dataset & \makecell{Train,\\Validation} & Test \\ \hline
Flowbench~\cite{flowbench_paper} & \includegraphics[width=0.3\columnwidth, valign=m]{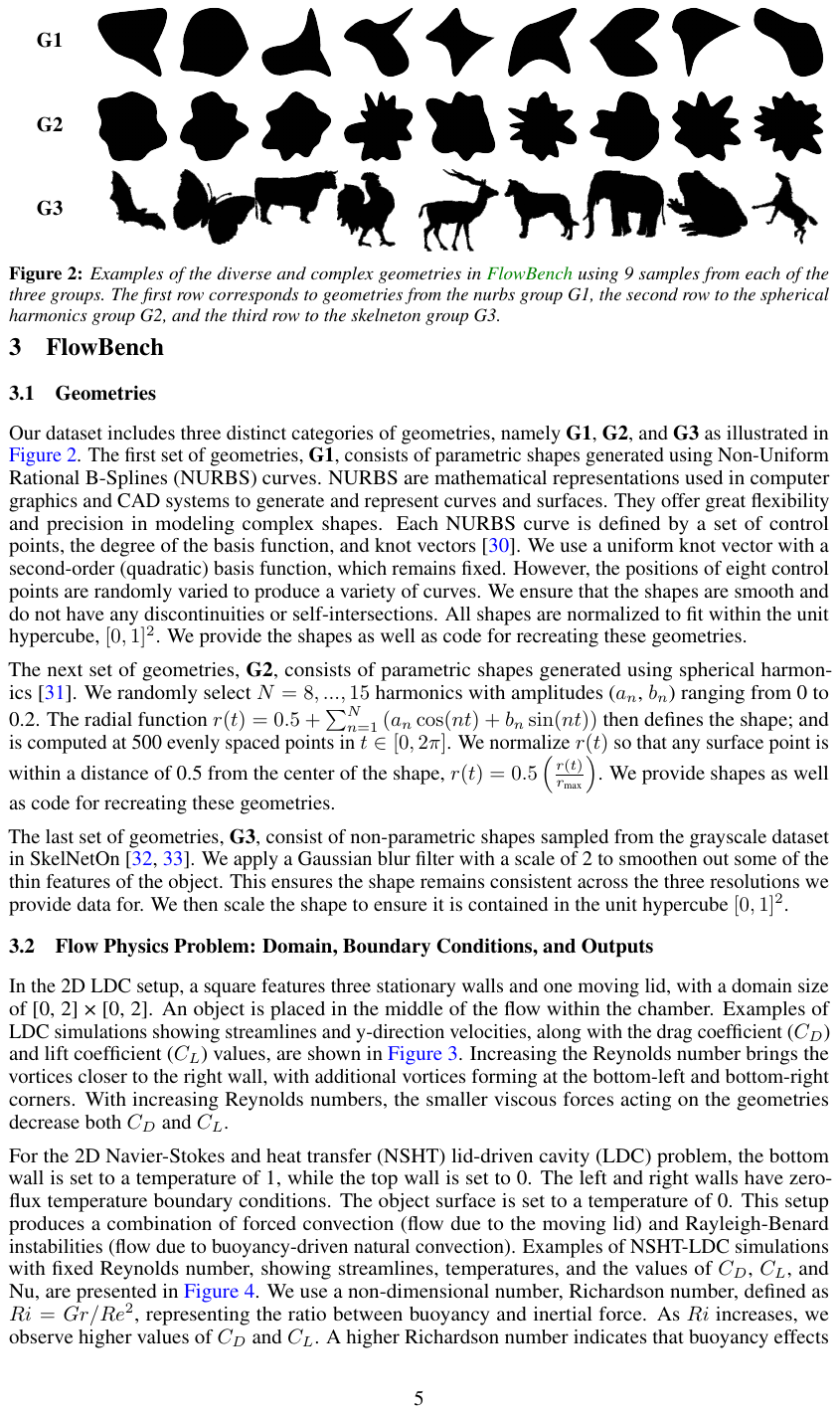} & \includegraphics[width=0.3\columnwidth, valign=m]{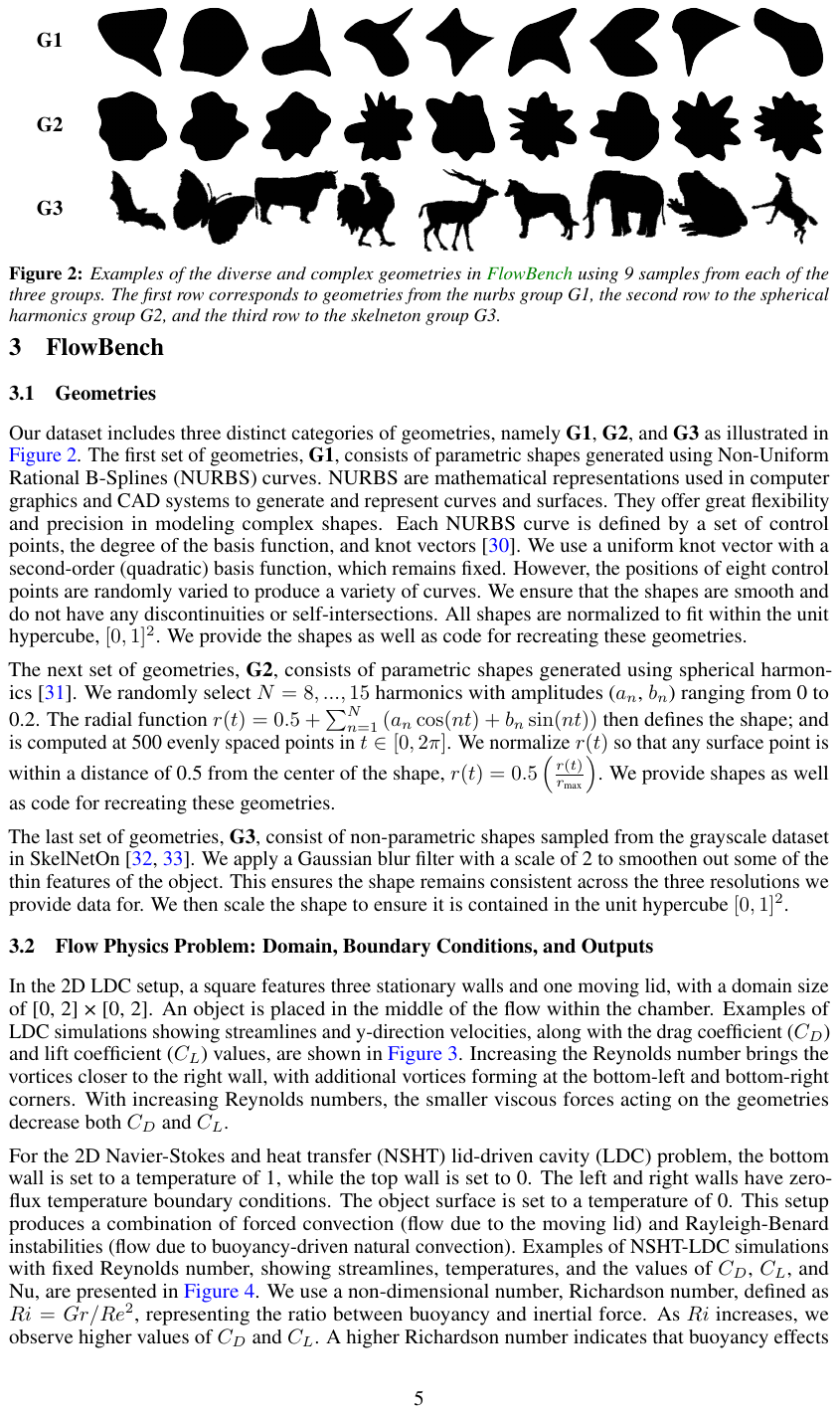} \\ \hline
Darcy~\cite{diffeono_paper} & \includegraphics[width=0.1\columnwidth, valign=m]{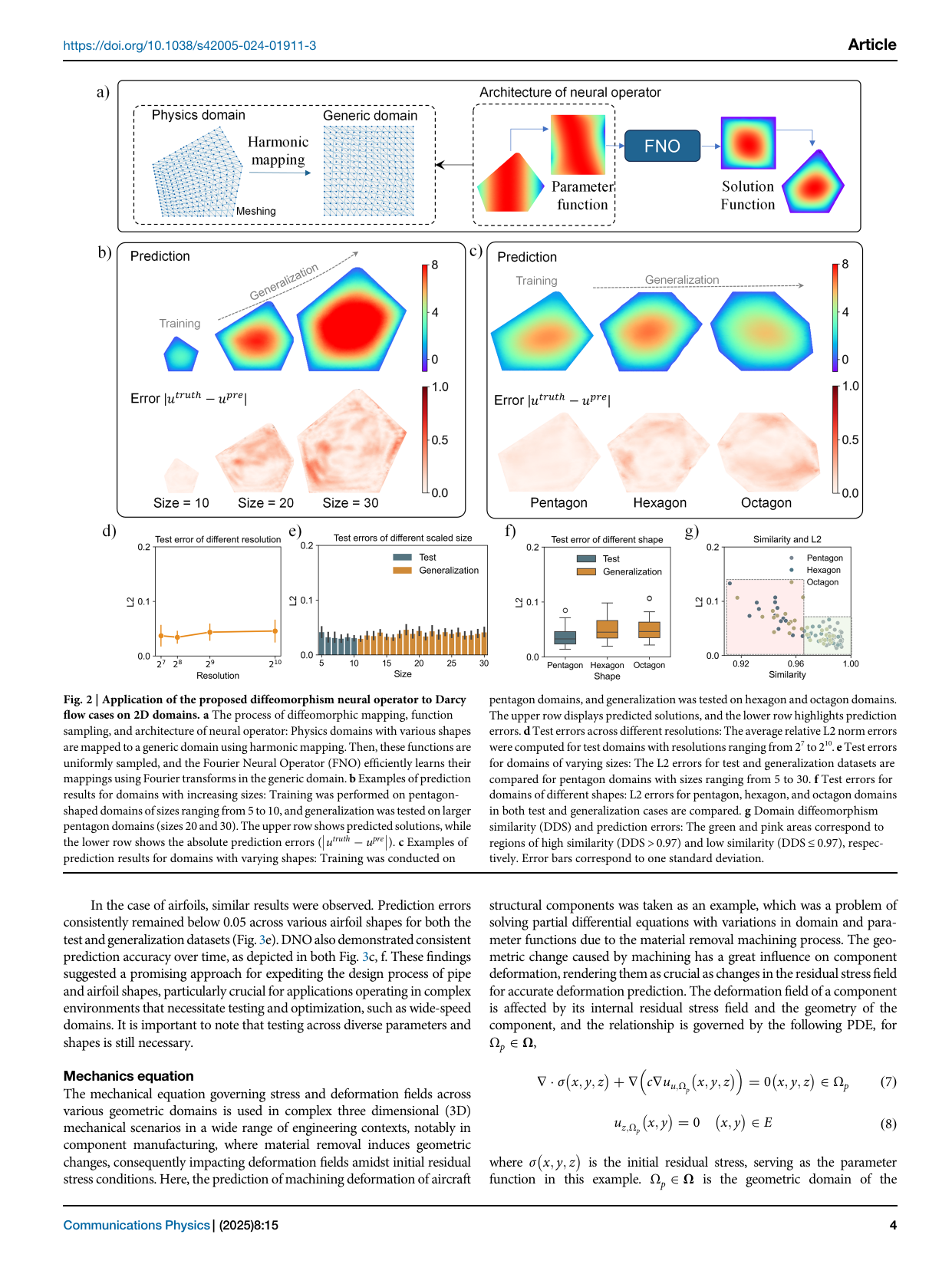} & \includegraphics[width=0.15\columnwidth, valign=m]{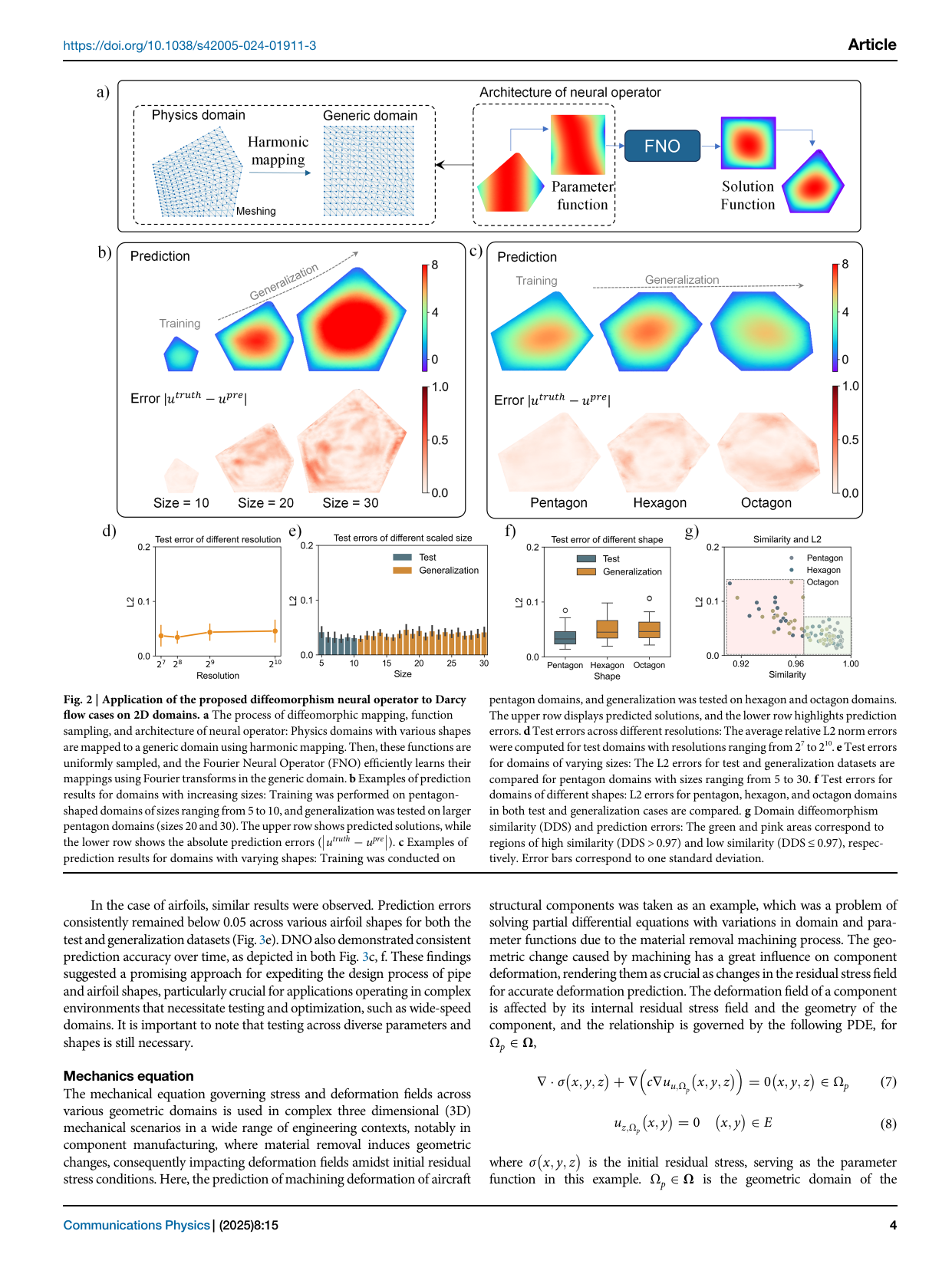}  \\ 
\bottomrule
\end{tabular}}
\label{train_test_split}
\end{table}

For complete details on datasets, input/output channel specifications, and training splits, please refer to Appendix \ref{data_detail}.

\begin{table}[t!]
\centering
\caption{
Relative $L^2$ error comparing operators without geometry memory (\textbf{No Memory}) and their best memory-injected variants (\textbf{With Memory}).  \textbf{Injection} denotes the layer(s) where memory is applied (L0–L3: shallow to deep; Early/Late: near input/output; All: every layer).  \textbf{Gain} reports the relative improvement. \colorbox{bestgreen}{Best values} are highlighted in green.
}
\label{tab:mainresult}

\renewcommand{\arraystretch}{1.15}
\setlength{\tabcolsep}{3.5pt}

\resizebox{\columnwidth}{!}{
\begin{tabular}{llcccc}
\toprule
Model & Dataset & No Memory & With Memory & Injection & Gain (\%) \\
\midrule

\multirow{4}{*}{FNO}
& LDC-NS   & $3.61\mathrm{e}{-1}$ & \cellcolor{bestgreen}$1.21\mathrm{e}{-1}$ & Late  & 66.6 \\
& LDC-NSHT & $4.11\mathrm{e}{-1}$ & \cellcolor{bestgreen}$2.39\mathrm{e}{-1}$ & All   & 41.8 \\
& AirfRANS & $6.13\mathrm{e}{-2}$ & \cellcolor{bestgreen}$3.69\mathrm{e}{-2}$ & L0    & 39.8 \\
& Darcy    & $2.09\mathrm{e}{-1}$ & \cellcolor{bestgreen}$9.74\mathrm{e}{-2}$ & L3    & 53.4 \\
\midrule

\multirow{4}{*}{Transolver}
& LDC-NS   & $1.72\mathrm{e}{-1}$ & \cellcolor{bestgreen}$9.19\mathrm{e}{-2}$ & All   & 46.5 \\
& LDC-NSHT & $2.73\mathrm{e}{-1}$ & \cellcolor{bestgreen}$2.30\mathrm{e}{-1}$ & All   & 15.8 \\
& AirfRANS & $3.28\mathrm{e}{-2}$ & \cellcolor{bestgreen}$1.40\mathrm{e}{-2}$ & Early & 57.4 \\
& Darcy    & $1.16\mathrm{e}{-1}$ & \cellcolor{bestgreen}$9.11\mathrm{e}{-2}$ & L1    & 21.2 \\
\midrule

\multirow{4}{*}{LNO}
& LDC-NS   & $9.25\mathrm{e}{-2}$ & \cellcolor{bestgreen}$8.39\mathrm{e}{-2}$ & L3 & 9.23 \\
& LDC-NSHT & $2.36\mathrm{e}{-1}$ & \cellcolor{bestgreen}$2.29\mathrm{e}{-1}$ & L3 & 2.97 \\
& AirfRANS & $1.32\mathrm{e}{-2}$ & \cellcolor{bestgreen}$9.73\mathrm{e}{-3}$ & Late & 26.3 \\
& Darcy    & $1.52\mathrm{e}{-1}$ & \cellcolor{bestgreen}$1.12\mathrm{e}{-1}$ & L2 & 26.5 \\
\bottomrule
\end{tabular}}
\end{table}

\begin{figure*}[t!]
\centering
\includegraphics[width=0.9\textwidth]{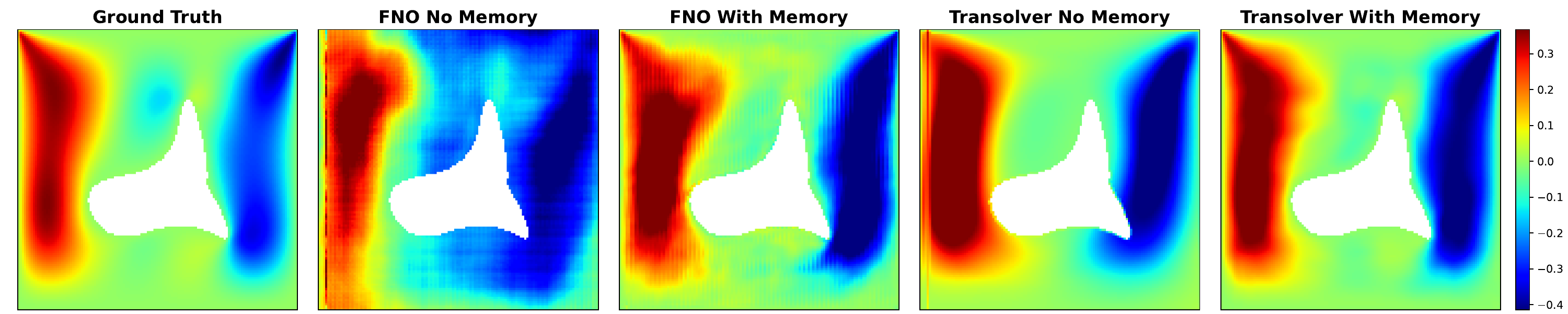}
\caption{Impact of Memory Injection (LDC-NSHT). Without memory, FNO loses flow dynamics and Transolver ignores the obstacle. Injecting memory at all layers correct these failure modes.}
\label{main_compare}
\end{figure*}

\subsection{Main Results} 
We begin by empirically validating the \textit{Forgetting Hypothesis} on \textbf{FNO} and \textbf{Transolver} backbones and demonstrating the efficacy of memory injection in Section \ref{fnotransolver}. This is followed by a control analysis of the \textbf{LNO} in Section \ref{lnosection}, which confirms that architectures with \textit{intrinsic memory} benefit minimally from extrinsic injection. Finally, Section \ref{ablation_section} presents ablation studies on injection location, encoder architectures, and injection strategies. Comprehensive tabulated results and visual galleries are provided in Appendices \ref{full_result_section} and \ref{gallery_appendix}, respectively. 

\subsubsection{FNO and Transolver Result}\label{fnotransolver}
\textbf{The Forgetting Hypothesis. }
To quantify geometric information loss, we analyzed the layer-wise MSE for geometry mask reconstruction, as shown in Figure \ref{fno_transolver_forget_fig}(a). Visual evidence of this degradation is provided in Figure \ref{fno_transolver_forget_fig}(b), which displays a representative reconstruction from Transolver’s second layer. We complemented these spatial metrics with a spectral frequency analysis in Figure \ref{freq_analysis} to track boundary information across network depths.\footnote{The final layer is excluded as it immediately precedes the output projection. At this final stage, the feature transitions from latent representations to approximating the target solution field, rendering geometric probing less indicative of the internal memory dynamics observed in first 3 layers.}

This empirical analysis reveals that the \textit{Forgetting Hypothesis} manifests through fundamentally different mechanisms in Fourier versus attention-based architectures. The memory-injected FNO exhibits spectral peaks at wavenumbers $k \in \{16, 32, 64\}$. Given the spatial resolution $S=128$, these wavenumbers correspond to $S/8, S/4,S/2$, representing spatial wavelengths of 8, 4, 2 pixels that correspond to boundaries of different `sharpness'. Memory injection helps FNO to preserve these \textit{boundary} information. In contrast, Transolver exhibits \textit{macroscopic} geometric collapse: while its spectral domain signal pattern remain unchanged upon memory injection, it loses the structural context of the domain itself, failing to reconstruct the macroscopic geometry (mask) at deeper layers. \footnote{We posit that for FNO, retaining spectral boundary information in latent space is a necessary condition for outputs' geometry awareness. For Transolver, however, the final layer is capable of recovering sharp boundaries from latent features that appear spectrally smooth.}
\begin{figure}[t!]
\centering
\begin{subfigure}[c]{0.66\columnwidth}
\centering
\includegraphics[width=\linewidth]{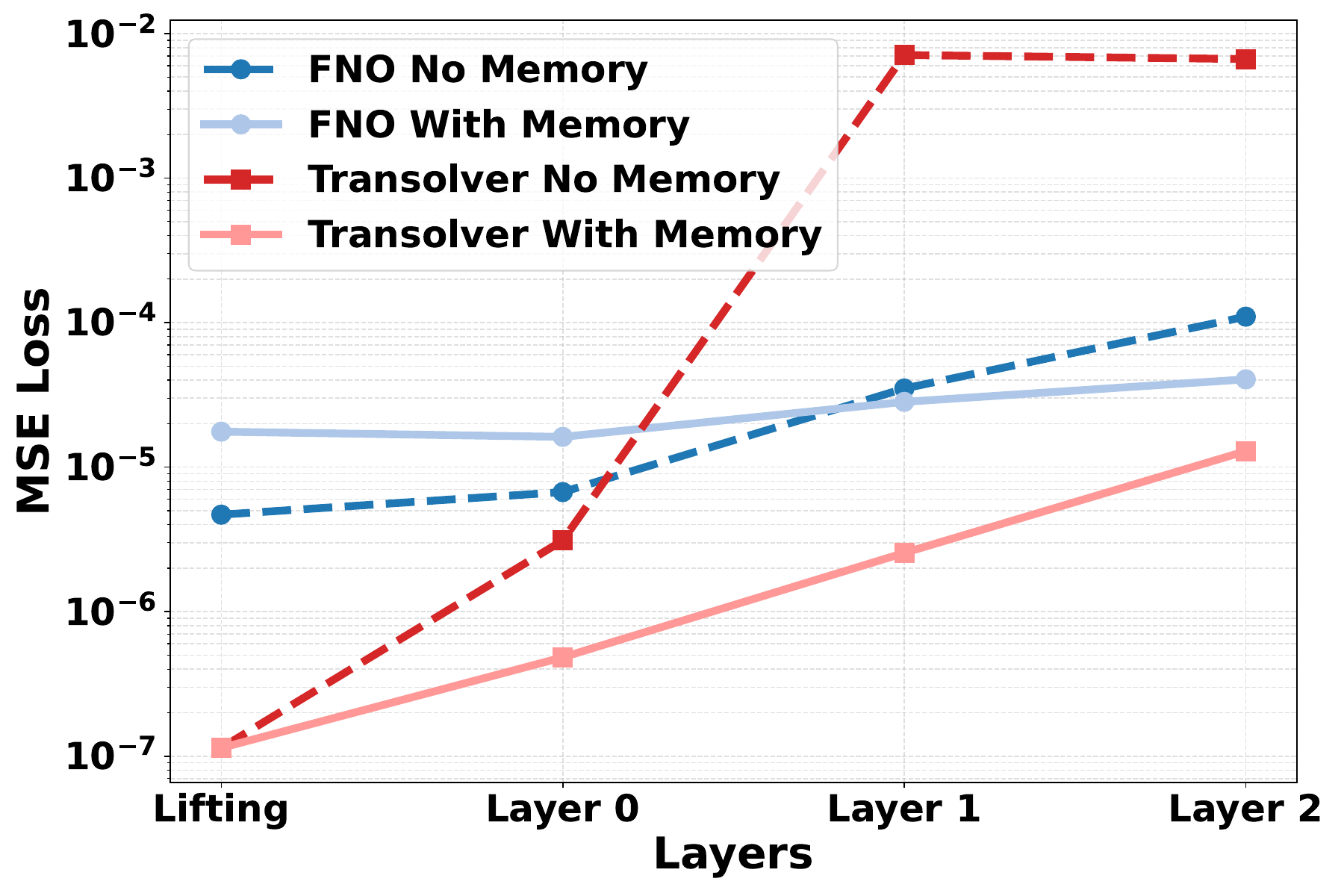}
\caption{}
\end{subfigure}
\hfill
\begin{subfigure}[c]{0.33\columnwidth}
\centering
\includegraphics[width=\linewidth]{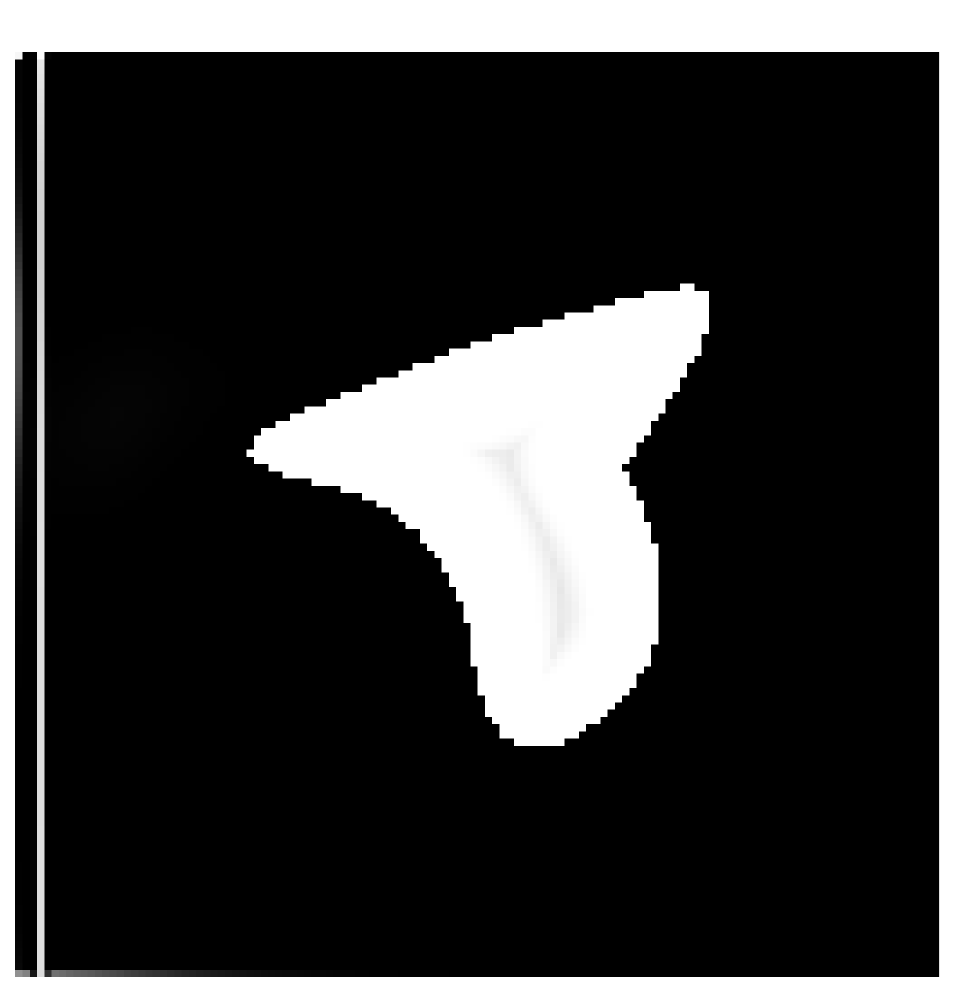}
\caption{}
\end{subfigure}
\caption{Validation of the Forgetting Hypotheses (LDC-NSHT). (a) Reconstruction MSE of the domain mask from hidden states. Numbers in this figure available in Appendix \ref{forget_data}, (b) Representative reconstruction from the Layer 2 hidden state of a standard Transolver, note the blurring in the middle.}
\label{fno_transolver_forget_fig}
\end{figure}

\begin{figure}[t!]
\centering
\begin{subfigure}[b]{0.48\columnwidth}
\centering
\includegraphics[width=\linewidth]{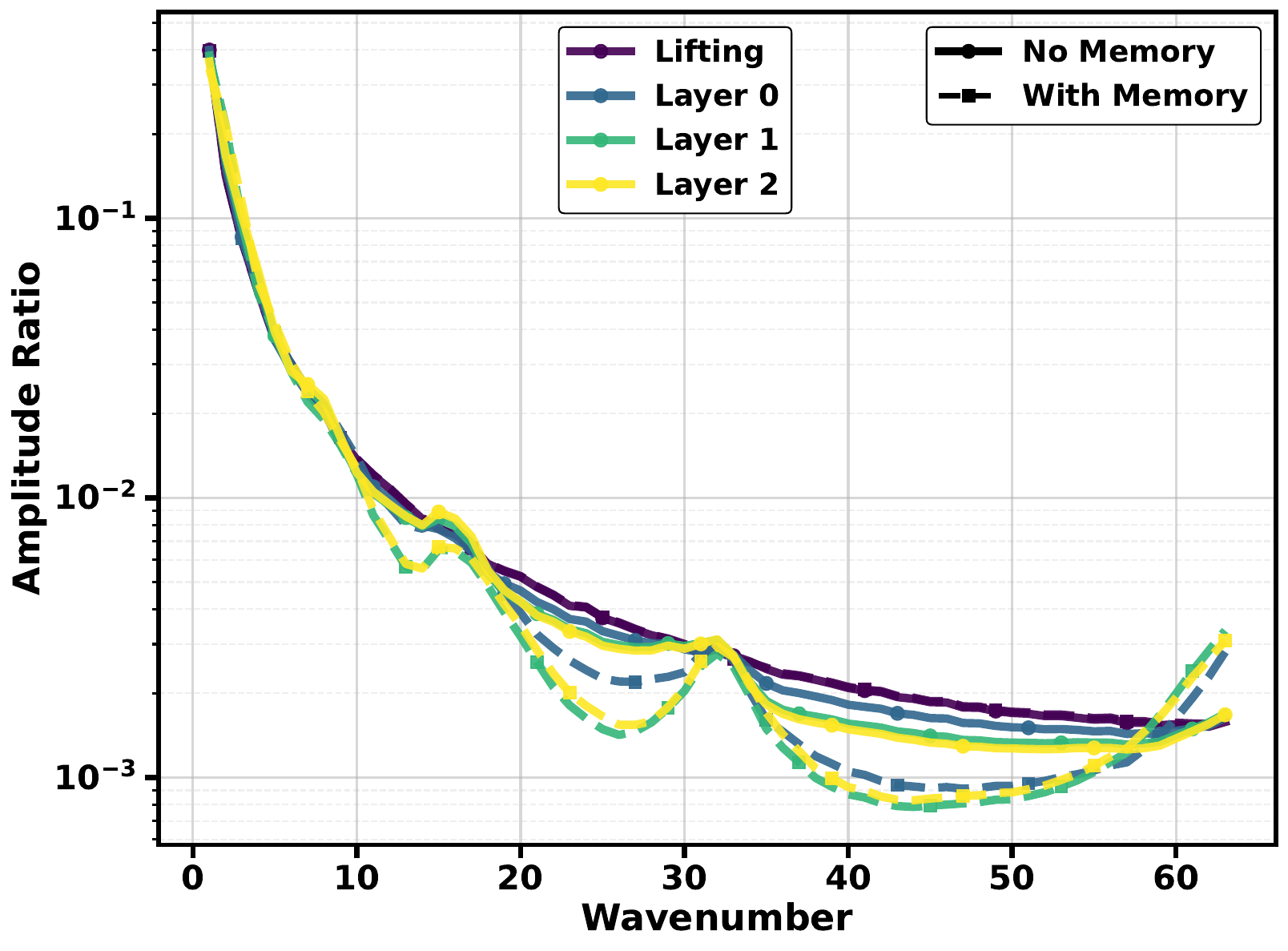}
\caption{}
\end{subfigure}
\hfill
\begin{subfigure}[b]{0.48\columnwidth}
\centering
\includegraphics[width=\linewidth]{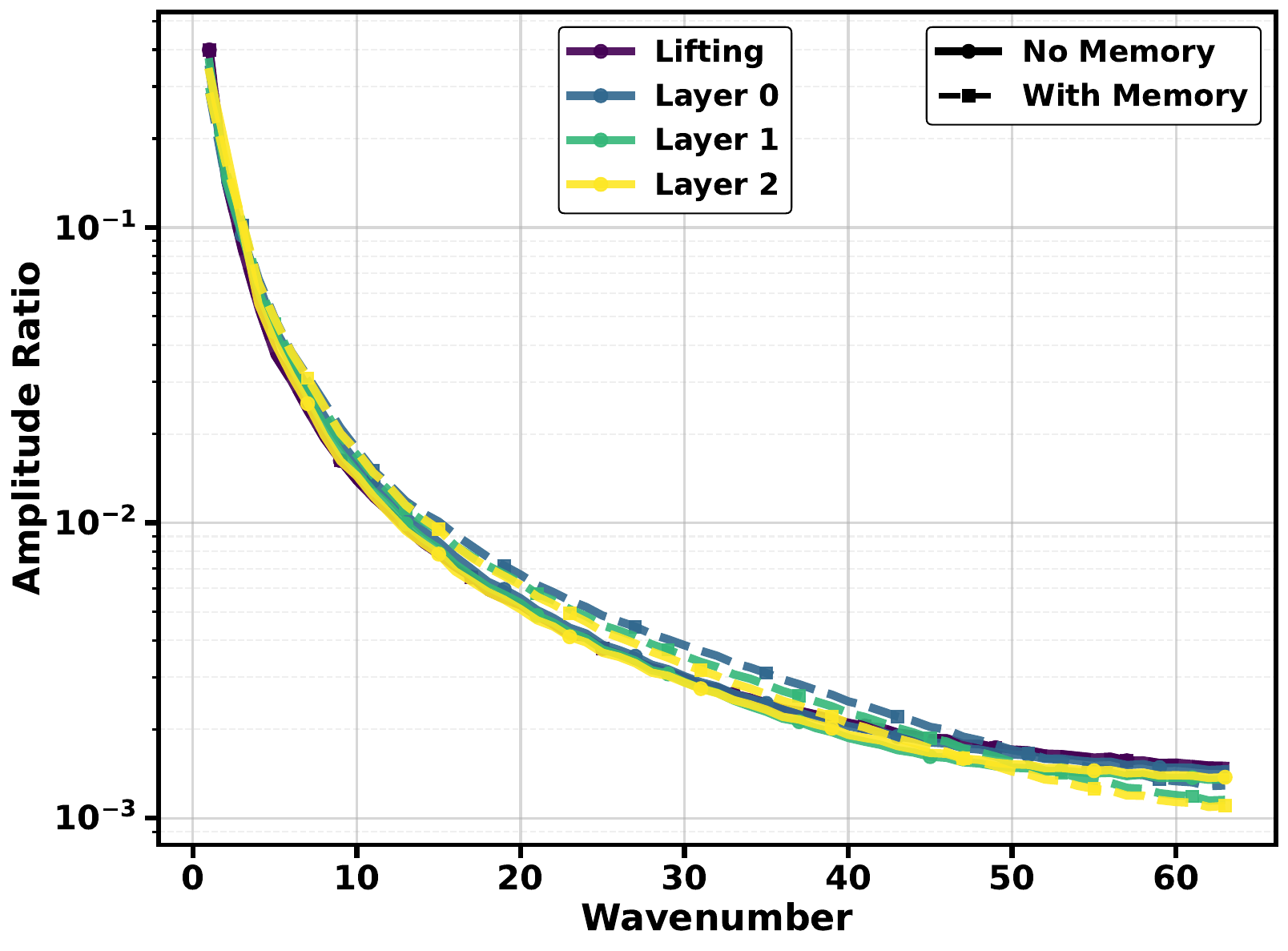}
\caption{}
\end{subfigure}
\caption{Spectral frequency analysis on the LDC-NSHT dataset. Memory is injected at all layers for `With Memory' model.(a) Memory injection helps FNO retain boundary information, (b) Memory injection does not alter Transolver's spectral domain signal pattern.}
\label{freq_analysis}
\end{figure}

\textbf{The Benefit of Memory Injection. }
In Table \ref{tab:mainresult}, architectures with memory injection consistently outperform models without memory injection. Representative experiments of FNO and Transolver on LDC-NSHT dataset is shown in Figure \ref{main_compare}.

\subsubsection{Intrinsic vs. Extrinsic Memory: The Case of the Laplace Neural Operator}\label{lnosection}To validate that the performance gains in FNO and Transolver stem specifically from restoring lost geometric information, we investigate the LNO as a control case. Unlike Fourier-based methods, which implicitly assume periodicity, LNO utilizes the Laplace transform, which can effectively capture transient as well as periodic dynamics. This correspond to boundary effect in spatial domain and prevents the `washing out' of boundary information observed in FNO as well as the `forgetting' of macroscopic geometry information observed in Transolver. We termed this property of LNO \textit{implicit memory}.

Our experiments in Figure \ref{lno_forget_fig} strongly corroborate this theoretical analysis. Layer-wise geometric reconstruction (Figure \ref{lno_forget_fig}(a)) confirms that the standard LNO backbone naturally retains macroscopic geometry information across depth, rendering the extrinsic `reminder' redundant. Similarly, spectral frequency analysis (Figure \ref{lno_forget_fig}(b)) reveals that standard LNO (solid line) already has spectral peaks at 16, 32, 64 ($S/2,S/4,S/8$ given resolution $S=128$) which is indicative of boundary information. In contrast to FNO, LNO does not need explicit memory injection to preserve these boundary information. 

This observation is further supported by our experiments on memory injection in Table \ref{tab:mainresult}. The standard LNO baseline achieves high-fidelity results that significantly outperform the standard FNO baseline without requiring any external intervention. Unlike the FNO and Transolver architectures—where memory injection is a decisive factor in reducing error—the LNO model sees negligible performance gains, and in some cases, marginal degradation (see Table \ref{full-result}), when subjected to the same injection protocol. This suggests that the geometric information our method attempts to restore is already inherently present within the LNO architecture.

This `negative' result serves as a powerful validation of the \textit{Forgetting Hypothesis}. It demonstrates that our method is not merely a generic performance booster; rather, it specifically targets the loss of geometric information, either macroscopic geometry information or boundary information. Since LNO naturally preserves both via its pole-residue formulation, it requires no extrinsic restoration, confirming that the vulnerability observed in FNO and Transolver is indeed a specific deficiency in geometry information retention.
\begin{figure}[t!]
\centering
\begin{subfigure}[b]{0.48\columnwidth}
\centering
\includegraphics[width=\linewidth]{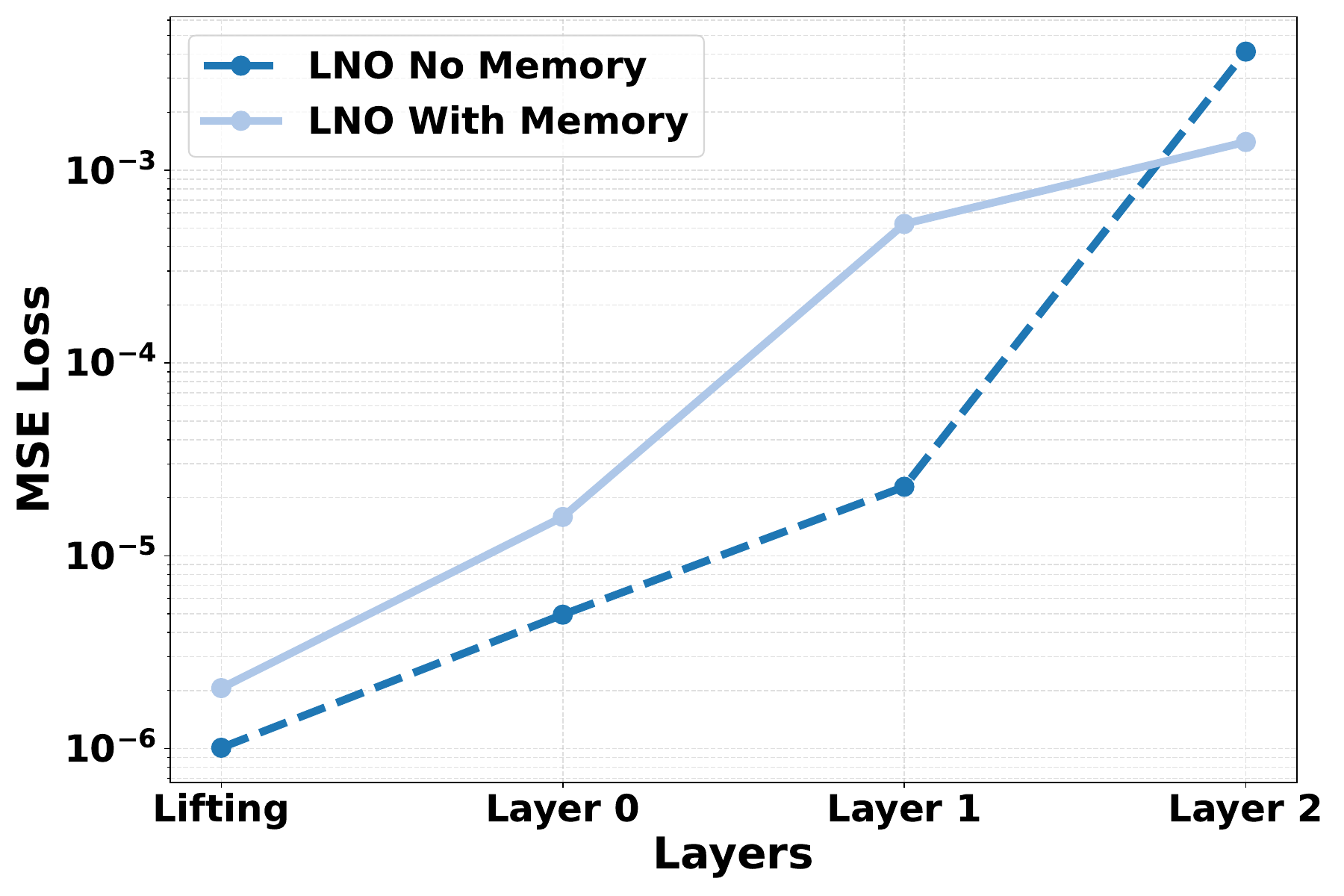}
\caption{}
\end{subfigure}
\hfill
\begin{subfigure}[b]{0.48\columnwidth}
\centering
\includegraphics[width=\linewidth]{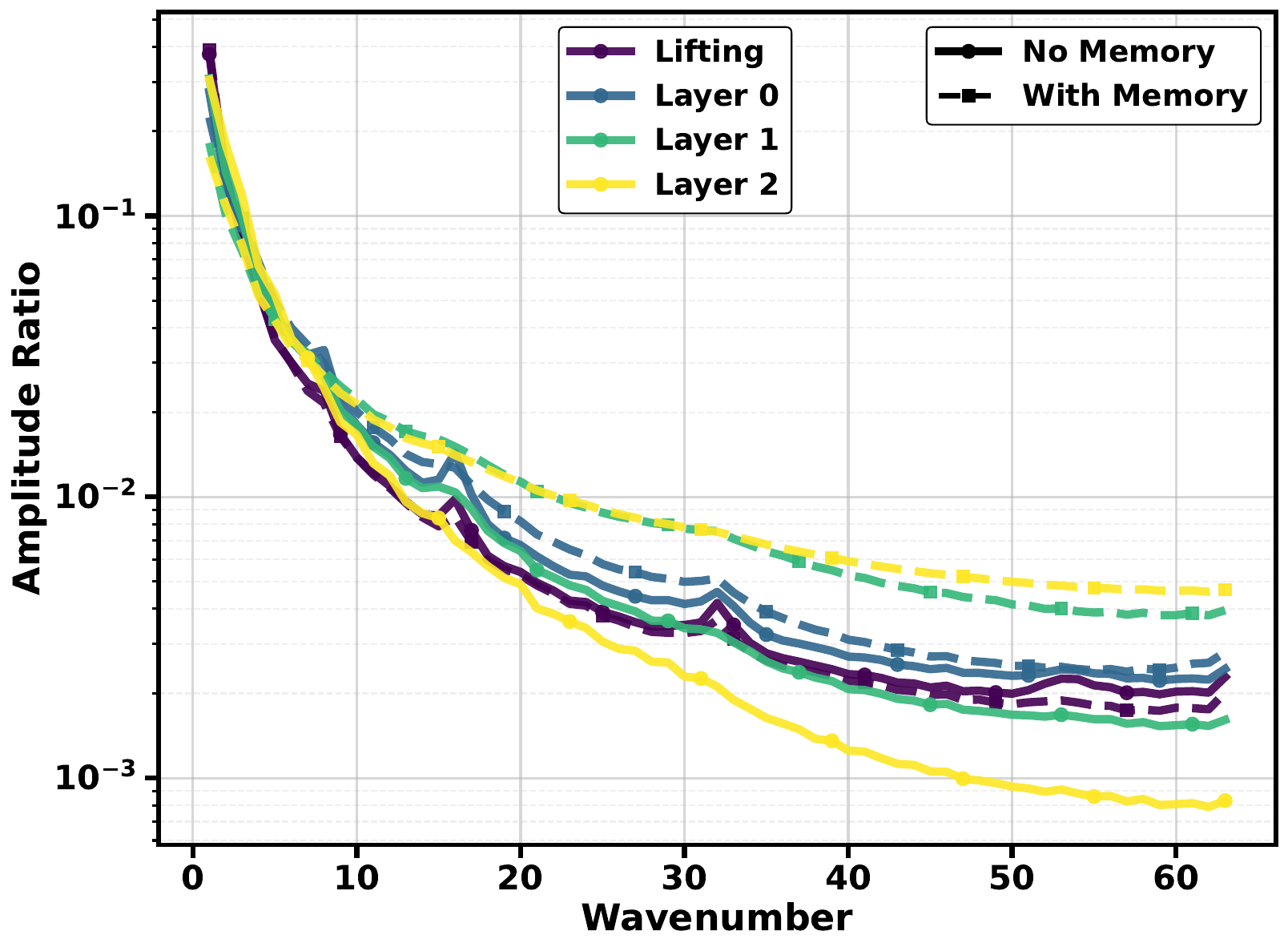}
\caption{}
\end{subfigure}
\caption{LNO Control Analysis (on LDC-NSHT). Unlike FNO or Transolver, memory injection yields minimal gain in (a) reconstruction MSE or (b) boundary information. This negative result confirms LNO possesses intrinsic memory.}
\label{lno_forget_fig}
\end{figure}

\subsection{Ablation Study and Analysis}\label{ablation_section}
We conducted comprehensive ablation studies to evaluate the impact of three key design components: memory injection location, encoder architecture, and injection strategy, whose specific configurations are summarized in Table \ref{ablation_summary_table}.

\begin{table}[t!]
\centering
\caption{Summary of ablation study. Ablation done is \textbf{bold faced}.}
\resizebox{1\columnwidth}{!}{
\begin{tabular}{ccccc}
\toprule
Section & Ablation on & \makecell{Injection\\ Location} & Encoder & \makecell{Injection\\ Strategy} \\ \hline
\ref{shortcut_section} & \makecell{Injection\\ Location} & \textbf{0,1,2,3} & U-Net & FiLM \\ \hline
\ref{ablation_encoder_section} & Encoder & All & \textbf{\makecell{U-Net,\\DeepONet}} & FiLM \\ \hline
\ref{ablation_injection_section} & \makecell{Injection\\ Strategy} & \makecell{Early,Late,\\All,0,1,2,3} & U-Net & \textbf{\makecell{FiLM,\\ Concatenation,\\Additive injection}} \\ \bottomrule
\end{tabular}}
\label{ablation_summary_table}
\end{table}

\begin{table}[t!]
\centering
\caption{
Ablation of geometry memory injection depth (relative $L^2$ error). 
\textbf{No Memory} denotes the baseline. 
\tightcolorbox{lightred}{Highlighted cells} indicate catastrophic degradation under final-layer injection, evidencing the \emph{Geometric Shortcut}.
}
\label{tab:ablation-layers}

\renewcommand{\arraystretch}{1.15}
\setlength{\tabcolsep}{3.5pt}

\resizebox{\columnwidth}{!}{
\begin{tabular}{llcccc}
\toprule
Model & Injection & LDC-NS & LDC-NSHT & AirfRANS & Darcy \\
\midrule

\multirow{5}{*}{FNO}
& No Memory & $3.61\mathrm{e}{-1}$ & $4.11\mathrm{e}{-1}$ & $6.13\mathrm{e}{-2}$ & $2.09\mathrm{e}{-1}$ \\
& L0 & $1.33\mathrm{e}{-1}$ & $2.52\mathrm{e}{-1}$ & $3.69\mathrm{e}{-2}$ & $1.14\mathrm{e}{-1}$ \\
& L1 & $1.28\mathrm{e}{-1}$ & $2.49\mathrm{e}{-1}$ & $4.16\mathrm{e}{-2}$ & $1.10\mathrm{e}{-1}$ \\
& L2 & $1.22\mathrm{e}{-1}$ & $2.52\mathrm{e}{-1}$ & $4.00\mathrm{e}{-2}$ & $9.87\mathrm{e}{-2}$ \\
& L3 & $1.22\mathrm{e}{-1}$ & $2.46\mathrm{e}{-1}$ & $4.47\mathrm{e}{-2}$ & $9.74\mathrm{e}{-2}$ \\
\midrule

\multirow{5}{*}{Transolver}
& No Memory & $1.72\mathrm{e}{-1}$ & $2.73\mathrm{e}{-1}$ & $3.28\mathrm{e}{-2}$ & $1.16\mathrm{e}{-1}$ \\
& L0 & $1.01\mathrm{e}{-1}$ & $2.42\mathrm{e}{-1}$ & $1.67\mathrm{e}{-2}$ & $1.09\mathrm{e}{-1}$ \\
& L1 & $1.03\mathrm{e}{-1}$ & $2.37\mathrm{e}{-1}$ & $1.66\mathrm{e}{-2}$ & $9.11\mathrm{e}{-2}$ \\
& L2 & $1.11\mathrm{e}{-1}$ & $2.30\mathrm{e}{-1}$ & $1.67\mathrm{e}{-2}$ & $1.11\mathrm{e}{-1}$ \\
& L3 & \tightcolorbox{lightred}{$5.48\mathrm{e}{-1}$} 
     & \tightcolorbox{lightred}{$4.95\mathrm{e}{-1}$} 
     & \tightcolorbox{lightred}{$3.57\mathrm{e}{-2}$} 
     & $9.11\mathrm{e}{-2}$ \\
\bottomrule
\end{tabular}}
\end{table}

\begin{figure}[t!]
\centering
\includegraphics[width=1.0\linewidth]{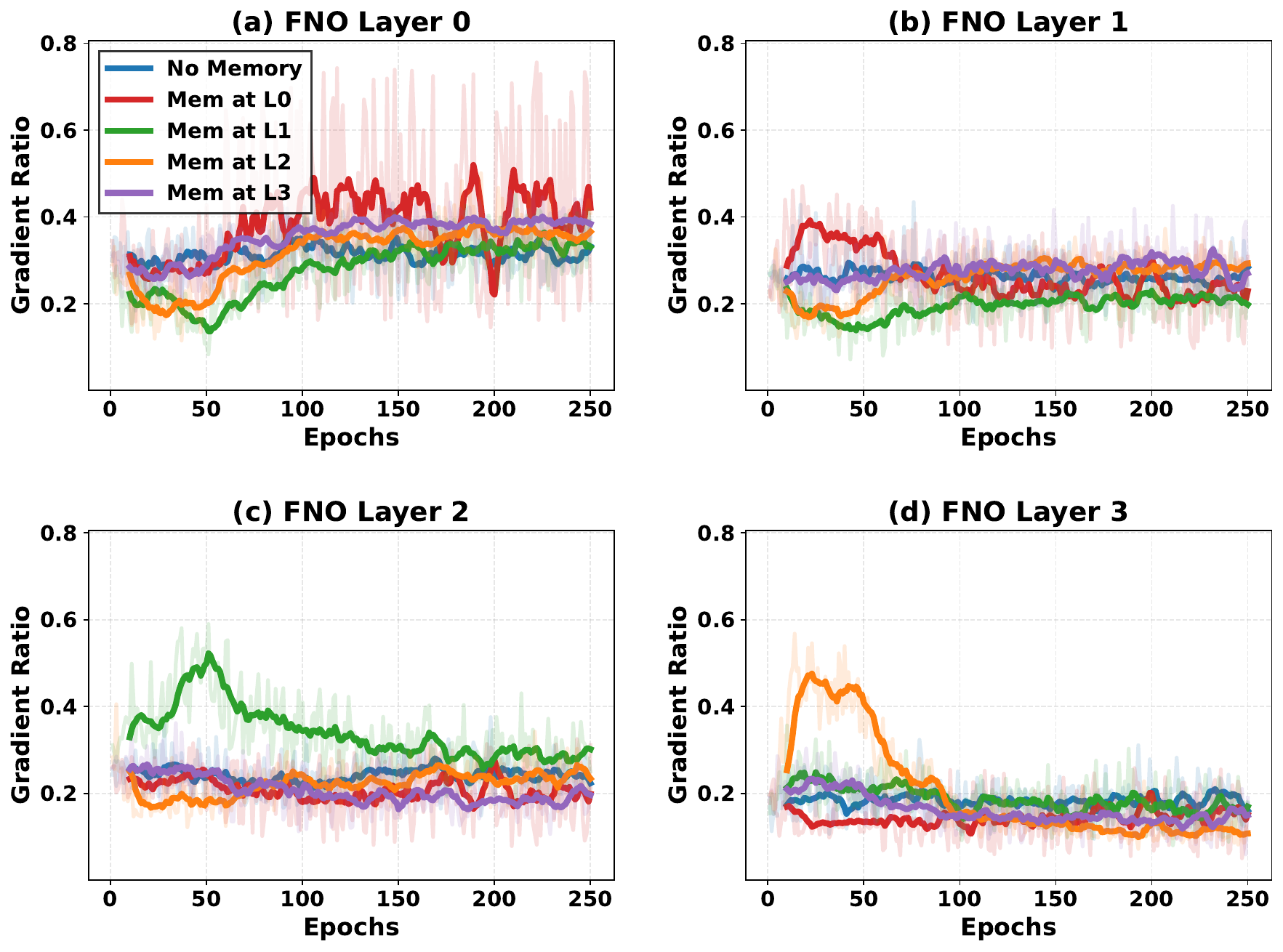}
\caption{FNO Layer-wise Gradient Ratios (LDC-NS). Evolution of relative gradient magnitude for different memory injection locations (Mem at L0–3). The network retains significant gradient signals in early layers even when memory is injected at the final layer, indicating stable learning dynamics.}
\label{gradfno}
\end{figure}

\begin{figure}[t!]
\centering
\includegraphics[width=1.0\linewidth]{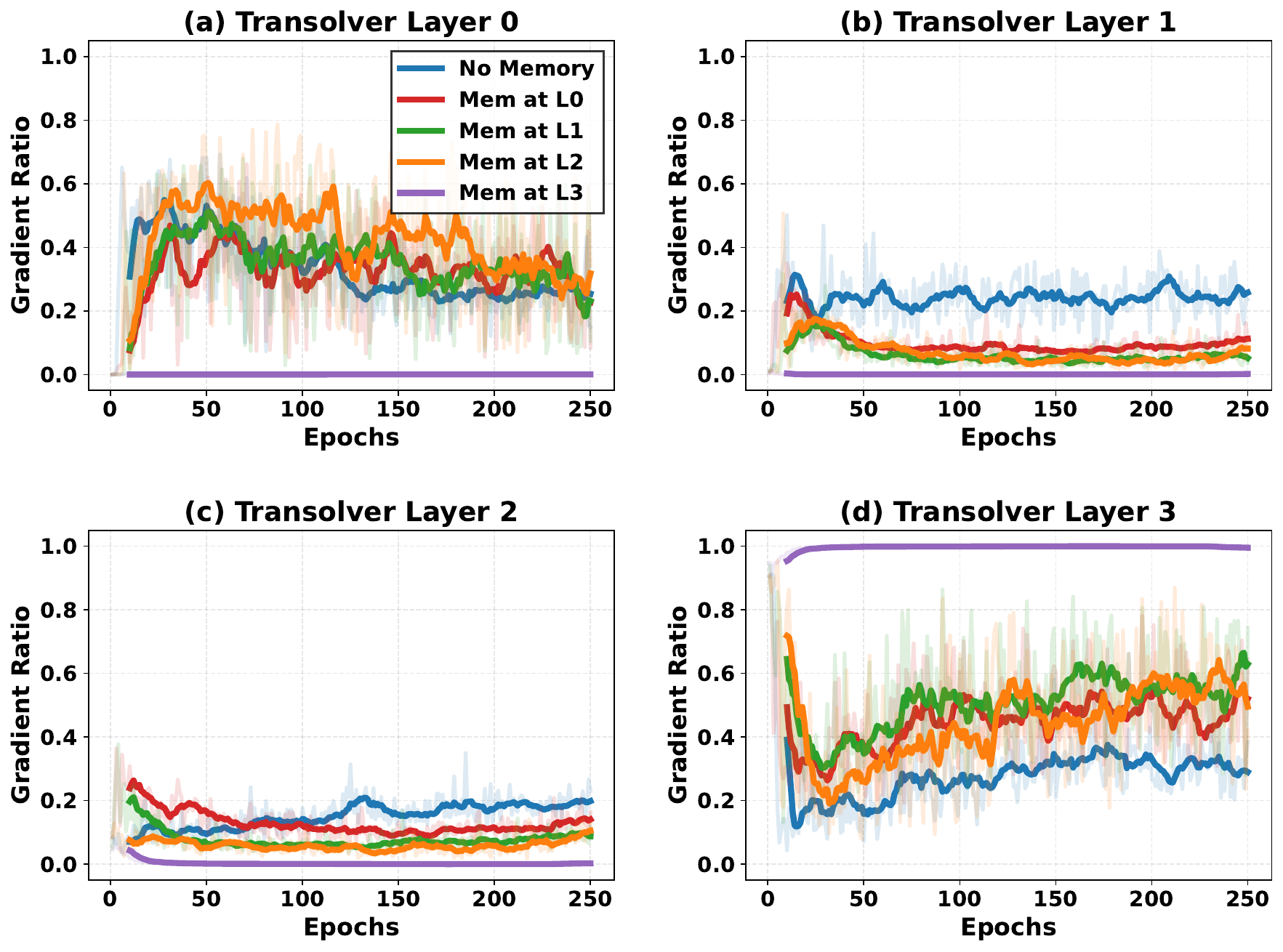}
\caption{Evidence of the \textit{Geometric Shortcut} in Transolver. Under late injection (Mem at L3), the gradient ratio saturates to 1.0 at the final layer (d) while collapsing to 0.0 in all upstream layers (a–c). The optimizer exploits the injected geometry to bypass the physics backbone, leading to feature collapse.}
\label{gradtransolver}
\end{figure}

\subsubsection{Optimization Dynamics: Robustness vs. Shortcut Learning}\label{shortcut_section}
A critical finding of our ablation study is the divergence in stability between Attention-based and Fourier-based architectures under late memory injection. While FNO remains robust regardless of injection depth, Transolver suffers a catastrophic failure when geometry is injected at the final layer.

As shown in Table \ref{tab:ablation-layers}, when memory is injected into Transolver at the final layer (Layer 3), the test error on LDC-NS, LDC-NSHT, AirfRANS explodes. In stark contrast, FNO benefits from memory injection even at the final layer.

To understand the root cause of Transolver's collapse, we analyzed the \textit{Gradient Ratio}—the relative magnitude of gradients at layer $l$ compared to the total network gradient ($R_l = ||\nabla_{\theta_l} \mathcal{L}|| / \sum_{k} ||\nabla_{\theta_k} \mathcal{L}||$). This metric reveals which layers drive the learning process.

Figure \ref{gradtransolver} illustrates a phenomenon we term \textbf{Geometric Shortcut}: In Layer 3 Injection (Purple line), the gradient ratio for the final layer saturates to 1.0, while the ratios for all preceding backbone layers collapse to 0.0. 

Because the memory injection occurs \textit{after} Layer 3 ($v_{\text{out}}=v_3\odot\gamma+\beta$), the optimizer discovers a greedy solution: it relies primarily on the geometric embeddings ($\gamma, \beta$) from the Memory Encoder. Consequently, Layer 3 is repurposed from a physics processor into a `carrier layer', adjusting its weights solely to act as a compatible substrate for the geometry injection. This greedy adaptation at Layer 3 absorbs the majority of the gradient signal, effectively shielding the upstream layers (0, 1, 2) from learning. The network `shortcuts' the global physic modeling (which requires depth) in favor of a superficial fit at the final interface.

We further test Transolver on Darcy dataset with only 1 layer and memory injection, whose error is $1.11\mathrm{e}{-1}$. This confirms that Transolver's success on Darcy dataset upon final layer injection originates from the simplicity of Darcy dataset. 

FNO (Figure \ref{gradfno}) exhibits remarkable stability. Even with final layer injection, the gradient ratio at the final layer remains moderate, allowing significant gradient signal to propagate back to the lifting and initial spectral layers.

This analysis proves that \textit{Early Injection} is not merely an architectural preference for Transolver but a stability requirement. It forces the network to integrate geometric constraints during feature construction, preventing the optimizer from exploiting the geometry shortcut.

\subsubsection{Types of Memory Encoder}\label{ablation_encoder_section}
To ensure that the observed performance gains are not an artifact of the specific U-Net architecture used for the geometry encoder, we conducted a comparative ablation using a DeepONet-based encoder. This setup tests the universality of the \textit{Forgetting Hypothesis} by isolating the injection mechanism from the encoding architecture.

As summarized in Table \ref{tab:ablation-encoder}, the benefits of \textit{Geometry Memory Injection} remain consistent regardless of the encoder employed. While the U-Net encoder generally achieves lower absolute errors, the DeepONet-based memory injection still yields substantial improvements over the baseline `No Memory' models. These results strongly support the conclusion that the performance bottleneck is indeed the loss of geometry information in deep layers. The explicit re-injection of geometric information is the critical remedial factor, irrespective of the specific architecture used to encode those constraints.

\begin{table}[t!]
\centering
\caption{
Ablation of the geometry memory encoder (relative $L^2$ error). 
Both U-Net and DeepONet encoders consistently improve performance over \textbf{No Memory}, 
showing that gains arise from geometric restoration rather than encoder choice. 
Best values per dataset are \tightcolorbox{bestgreen}{highlighted in green}.
}
\label{tab:ablation-encoder}

\renewcommand{\arraystretch}{1.15}
\setlength{\tabcolsep}{3.5pt}

\resizebox{\columnwidth}{!}{
\begin{tabular}{llcccc}
\toprule
Model & Encoder & LDC-NS & LDC-NSHT & AirfRANS & Darcy \\
\midrule

\multirow{3}{*}{FNO}
& No Memory & $3.61\mathrm{e}{-1}$ & $4.11\mathrm{e}{-1}$ & $6.13\mathrm{e}{-2}$ & $2.09\mathrm{e}{-1}$ \\
& U-Net & \cellcolor{bestgreen}$1.16\mathrm{e}{-1}$ & \cellcolor{bestgreen}$2.39\mathrm{e}{-1}$ & \cellcolor{bestgreen}$4.27\mathrm{e}{-2}$ & \cellcolor{bestgreen}$1.25\mathrm{e}{-1}$ \\
& DeepONet & $1.69\mathrm{e}{-1}$ & $2.94\mathrm{e}{-1}$ & $5.00\mathrm{e}{-2}$ & $1.68\mathrm{e}{-1}$ \\
\midrule

\multirow{3}{*}{Transolver}
& No Memory & $1.72\mathrm{e}{-1}$ & $2.73\mathrm{e}{-1}$ & $3.28\mathrm{e}{-2}$ & $1.16\mathrm{e}{-1}$ \\
& U-Net & \cellcolor{bestgreen}$9.19\mathrm{e}{-2}$ & \cellcolor{bestgreen}$2.30\mathrm{e}{-1}$ & \cellcolor{bestgreen}$1.51\mathrm{e}{-2}$ & $9.61\mathrm{e}{-2}$ \\
& DeepONet & $1.46\mathrm{e}{-1}$ & $2.57\mathrm{e}{-1}$ & $1.85\mathrm{e}{-2}$ & \cellcolor{bestgreen}$9.52\mathrm{e}{-2}$ \\
\bottomrule
\end{tabular}}
\end{table}

\subsubsection{Types of Injection Strategy}\label{ablation_injection_section}
We evaluated three distinct injection mechanisms—FiLM, Concatenation, and Additive injection mentioned in Section \ref{subsec:memory_injection}. As shown in Table \ref{tab:ablation-injection}, optimal explicit memory injection consistently improves performance over the baseline, regardless of the specific strategy employed. However, the choice of mechanism reveals a fundamental stability gap between spectral and attention-based architectures. 

While FNO proves robust to all injection methods, Transolver exhibits a critical vulnerability to Concatenation and Additive injection. Under Late and final layer injection location, these methods fail to prevent the \textit{Geometric Shortcut}, leading to geometric shortcut. In contrast, FiLM partly mitigates this instability. 

\begin{table}[t!]
\centering
\caption{
Injection mechanism ablation on LDC-NS (relative $L^2$ error). 
FiLM remains stable under late injection, whereas additive mechanisms (\textbf{Add}itive injection, \textbf{Concat}enation) lead to severe \tightcolorbox{lightred}{degradation} in Transolver.
}
\label{tab:ablation-injection}
\renewcommand{\arraystretch}{1.15}
\setlength{\tabcolsep}{5pt}
\resizebox{0.9\columnwidth}{!}{
\begin{tabular}{llccc}
\toprule
Model & Injection & FiLM & Add & Concat \\
\midrule

\multirow{3}{*}{\makecell{FNO\\No Mem: $3.62\mathrm{e}{-1}$}}
& Early & $1.29\mathrm{e}{-1}$ & $1.49\mathrm{e}{-1}$& $1.70\mathrm{e}{-1}$ \\
& Late  & $1.21\mathrm{e}{-1}$ & $1.46\mathrm{e}{-1}$& $1.58\mathrm{e}{-1}$ \\
& L3    & $1.22\mathrm{e}{-1}$ & $1.81\mathrm{e}{-1}$& $1.44\mathrm{e}{-1}$\\
\midrule

\multirow{3}{*}{\makecell{Transolver\\No Mem: $1.72\mathrm{e}{-1}$}}
& Early & $9.74\mathrm{e}{-2}$ & $1.27\mathrm{e}{-1}$& $1.23\mathrm{e}{-1}$\\
& Late  & $1.03\mathrm{e}{-1}$ & \cellcolor{lightred} $2.83\mathrm{e}{-1}$& \cellcolor{lightred} $5.20\mathrm{e}{-1}$ \\
& L3    & \cellcolor{lightred} $5.48\mathrm{e}{-1}$ & \cellcolor{lightred} $5.84\mathrm{e}{-1}$ & \cellcolor{lightred} $6.91\mathrm{e}{-1}$\\
\bottomrule
\end{tabular}
}
\end{table}


\section{Conclusion}
In this work, we formalize and validate the \textbf{Geometric Forgetting Hypothesis}: the observation that the global mixing mechanisms inherent to deep neural operators—specifically FFT and Self-Attention—progressively attenuate geometry information as network depth increases. In particular, attention based operator experience \textit{macroscopic} geometry forgetting while Fourier based operators struggle with retaining \textit{boundary} information. We utilize \textbf{Geometry Memory Injection} as a remedy of this forgetting instead of merely as a performance booster.

Beside these, we offered two insights into deep neural operator architectures: 

$\bullet\ $\textbf{Intrinsic vs. Extrinsic Memory: }Our control analysis using the LNO provides a rigorous confirmation of the forgetting phenomenon. Because LNO mathematically encodes boundary conditions via the pole-residue formulation and hence has \textit{intrinsic memory}, it derives minimal benefit from our injection framework. This negative result confirms that the vulnerability observed in standard FNO and Transolver backbones is specific structural deficiencies of forgetting either macroscopic geometry or boundary information, rather than a capacity limitation that can be resolved by merely scaling parameters or increasing depth. 

$\bullet\ $\textbf{The Geometric Shortcut: }We uncovered a critical divergence in how spectral and attention-based architectures react to geometry information re-injection. While FNO remains robust to late-stage injection, Transolver exhibit a \textbf{Geometric Shortcut}, where the optimizer exploits explicit geometry information to bypass major physics backbones, leading to feature collapse. 

Ultimately, our findings suggest that in deep operator learning, geometry information function as persistent constraints that must be actively maintained against the `washing out' effect of global spectral or attention layers. We hope this necessitates a shift in research focus: rather than relying on increasingly complex input encoders and using heuristic injection strategies, more attention should be placed on principled architectural mechanisms that actively preserve geometric memory throughout the network depth. 

\textbf{Limitations and outlooks. }
Our study has limitations. First, reliance on Cartesian grids necessitates interpolation, potentially introducing aliasing artifacts near complex boundaries. Second, our evaluation is restricted to steady-state problems; extending this to time-dependent dynamics is a critical future direction. Finally, while we provide strong empirical validation, establishing a rigorous theoretical quantification of geometric forgetting—perhaps leveraging Information Bottleneck Theory \cite{informationbottleneck_paper}—remains an open challenge.

\section*{Impact Statement}
This paper presents work whose goal is to advance the field of Machine
Learning. There are many potential societal consequences of our work, none
which we feel must be specifically highlighted here.

\section*{Acknowledgement}
We thank Chun-Wun Cheng for helpful discussions. AIAR gratefully acknowledges the support of the YMSC, Tsinghua University. This work is also  supported by Tsinghua University Initiative Scientific Research Program, and Tsinghua University Dushi Program.


\bibliography{example_paper}
\bibliographystyle{icml2026}

\newpage
\appendix
\onecolumn

\section{Detailed Dataset Description}\label{data_detail}
\subsection{Flowbench-LDC}
We utilize the 2D Lid-Driven Cavity subsets from the FlowBench dataset. These provide high-fidelity steady-state solutions for incompressible flows within a cavity containing complex internal obstacles. We consider two distinct regimes: (1) LDC-NS, governed purely by the Navier-Stokes equations, and (2) LDC-NSHT, a multiphysics problem coupling fluid dynamics with heat transfer. Critically, we assess geometric generalization by training on one class of shapes (`G2') and testing on a distinct class (`G1'). See Figure \ref{flowbench_vis}. 
\begin{figure}[h]
\centering
\includegraphics[width=1\linewidth]{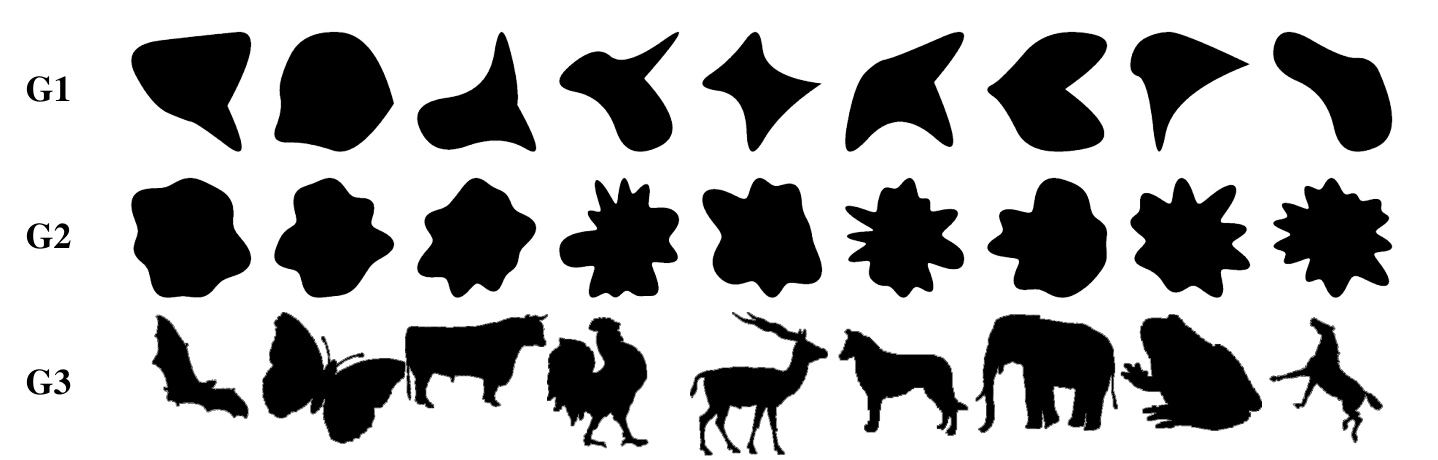}
\caption{Flowbench dataset shapes. This Figure is adopted from \cite{flowbench_paper}}
\label{flowbench_vis}
\end{figure}
\paragraph{Inputs and Outputs}
\begin{itemize}
\item \textbf{LDC-NS}: The input tuple consists of $(Re, \text{SDF}, \text{Mask}, \mathbf{x})$, where $Re$ is the Reynolds number, and $\mathbf{x}$ are coordinate channels. The output is the velocity field $(u_x, u_y)$ and pressure $p$.
\item \textbf{LDC-NSHT}: The input tuple is expanded to include the Richardson number ($Ri$), representing the ratio of buoyancy to inertial forces: $(Re, Ri, \text{SDF}, \text{Mask}, \mathbf{x})$. The output includes temperature $T$ in addition to velocity: $(u_x, u_y, T)$.
\end{itemize}
\paragraph{Splits and Generalization}To rigorously test geometric memory, we enforce a strict separation of shape categories.
\begin{itemize}
\item\textbf{Training}: 800 samples from the `G2' category.
\item\textbf{Validation}: 200 samples from the `G2' category.
\item\textbf{Testing}: 800 samples from the `G1' category, testing zero-shot generalization to unseen shapes. 
\end{itemize}
\subsection{AirfRANs}
This dataset captures steady-state aerodynamics over NACA airfoils in a subsonic regime, governed by the Reynolds-Averaged Navier-Stokes (RANS) equations. It represents an external flow problem where the geometry (the airfoil) is defined by a mask and Signed Distance Function (SDF). We interpolate the original unstructured mesh data onto a regular grid to facilitate standard neural operator processing.
\paragraph{Preprocessing}The original dataset is defined on unstructured meshes. We interpolated these fields onto a regular Cartesian grid using nearest-neighbor interpolation.
\paragraph{Inputs and Outputs}
\begin{itemize}
\item\textbf{Inputs}: Reynolds number ($Re$), Angle of Attack ($\alpha$), SDF, Mask, and coordinates $\mathbf{x}$.
\item\textbf{Outputs}: Steady-state velocity components $(u_x, u_y)$ and pressure $p$.
\end{itemize}
\paragraph{Splits}The dataset of 1,000 simulations is split into 600 training, 200 validation, and 200 test samples.
\subsection{Darcy}
We adopt the Darcy flow benchmark from \cite{diffeono_paper}, which models diffusion in porous media. This dataset challenges the model to generalize across domain geometries: training is performed on irregular pentagonal domains, while testing is conducted on hexagonal and octagonal domains. The original dataset is defined on unstructured meshes of query points. We interpolated these fields onto a regular Cartesian grid using nearest-neighbor interpolation.

\textbf{Training Domain. } Irregular pentagons.

\textbf{Testing Domain. } Hexagons and Octagons. This evaluates the model's ability to handle changes in domain geometry.

\textbf{Tasks. }Map the diffusion coefficient $a(x, y)$ (permeability) to the pressure solution $u(x, y)$.

\section{Raw Data}
\subsection{The Forgetting Test}\label{forget_data}
Our lightweight decoder consist of Conv2d $\to$ GELU $\to$ Conv2d $\to$ Sigmoid. The numerical result for the forgetting test is in Table \ref{forget_result}.  

\begin{table}[H]
\centering
\caption{Forgetting Test on LDC-NSHT dataset. Here memory is injected at all layers for `With Memory' models. Transolver benefit most from memory injection, while FNO and LNO benefit marginally, sometimes even suffer degradation, upon memory injection}
\renewcommand{\arraystretch}{1.15}
\setlength{\tabcolsep}{3.5pt}
\resizebox{0.7\textwidth}{!}{
\begin{tabular}{lccccc}
\toprule
& & Lifting & Layer 0 & Layer 1 & Layer 2  \\ 
\midrule
\multirow{2}{*}{FNO}
& No Memory & $4.69\mathrm{e}{-6}$ & $6.72\mathrm{e}{-6}$ & $3.50\mathrm{e}{-5}$ & $1.10\mathrm{e}{-4}$  \\
& With Memory & $1.76\mathrm{e}{-5}$ & $1.62\mathrm{e}{-5}$ & $2.83\mathrm{e}{-5}$ & $4.05\mathrm{e}{-5}$  \\
\midrule
\multirow{2}{*}{Transolver}
& No Memory & $1.14\mathrm{e}{-7}$ & $3.11\mathrm{e}{-6}$ & $7.11\mathrm{e}{-3}$ & $6.67\mathrm{e}{-3}$   \\
& With Memory & $1.14\mathrm{e}{-7}$ &  $4.83\mathrm{e}{-7}$& $2.55\mathrm{e}{-6}$& $1.29\mathrm{e}{-5}$ \\ 
\midrule
\multirow{2}{*}{LNO}
& No Memory & $1.01\mathrm{e}{-6}$ & $4.95\mathrm{e}{-6}$ & $2.28\mathrm{e}{-5}$ & $4.12\mathrm{e}{-3}$  \\
& With Memory & $2.06\mathrm{e}{-6}$& $1.59\mathrm{e}{-5}$ & $5.26\mathrm{e}{-4}$ & $1.40\mathrm{e}{-3}$ \\
\bottomrule
\end{tabular}}
\label{forget_result}
\end{table}

\subsection{Geometry Memory Injection}\label{full_result_section}
\begin{table}[H]
\centering
\caption{
Summary of all result using FiLM injection. Best result for each model for each dataset are \tightcolorbox{bestgreen}{highlighted in green}. Degradation from No Memory model are \tightcolorbox{lightred}{highlighted in red}.
}
\label{full-result}
\setlength{\tabcolsep}{3.5pt}
\resizebox{0.7\columnwidth}{!}{
\begin{tabular}{llcccc}
\toprule
Model & Location & LDC-NS & LDC-NSHT & AirfRANS & Darcy \\
\midrule

\multirow{7}{*}{FNO}
& Non & 0.36148 & 0.41060 & 0.06126 & 0.20908 \\
& Early &0.12915 & 0.25293 & 0.04397 & 0.10824 \\
& Late & 0.12080 & 0.24875 & 0.03824 & 0.11165  \\
& All & \cellcolor{bestgreen} 0.11621 & \cellcolor{bestgreen}0.23882 & 0.04269 & 0.12542 \\
& L0 & 0.13338 & 0.25202 &\cellcolor{bestgreen} 0.03687 & 0.11400 \\
& L1 & 0.12822 & 0.24935 & 0.04156 & 0.11036 \\
& L2 & 0.12204 & 0.25163 & 0.03996 & 0.09867 \\
& L3 &  0.12167 & 0.24645 & 0.04469 & \cellcolor{bestgreen}0.09736 \\
\midrule

\multirow{7}{*}{Transolver}
& Non & 0.17185 & 0.27296 & 0.03282 & 0.11555 \\
& Early & 0.09740 & 0.23175  &\cellcolor{bestgreen} 0.01398 & 0.09356 \\
& Late & 0.10314 & 0.39573  & 0.01814 & 0.11111 \\
& All & \cellcolor{bestgreen}0.09188 & \cellcolor{bestgreen}0.22983 & 0.01507 & 0.09607  \\
& L0 & 0.10122 & 0.24216 & 0.01671 &0.10899 \\
& L1 & 0.10251 & 0.23714 &0.01661  &\cellcolor{bestgreen}0.09106 \\
& L2 &  0.11076 & 0.23044 & 0.01671 &0.11123  \\
& L3 &\cellcolor{lightred}0.54794 &\cellcolor{lightred} 0.49497 & \cellcolor{lightred}0.03573 &0.09113  \\
\midrule

\multirow{7}{*}{LNO}
& Non & 0.09248 & 0.23582 & 0.01320 & 0.15179 \\
& Early & 0.08906 &\cellcolor{lightred} 0.25346 & 0.01029 & 0.14652  \\
& Late & 0.08443 & 0.23401 & \cellcolor{bestgreen} 0.00973 & 0.11368  \\
& All & 0.08595 & 0.23111 & 0.01000 & 0.14523  \\
& L0 & 0.08821 &\cellcolor{lightred} 0.23775 & 0.01215 & 0.12848 \\
& L1 & 0.08974 & 0.22919 & 0.01048 &\cellcolor{lightred} 0.16756 \\
& L2 &\cellcolor{lightred} 0.10171 & 0.23000 & 0.01032 & \cellcolor{bestgreen}0.11162 \\
& L3 & \cellcolor{bestgreen}0.08394 & \cellcolor{bestgreen}0.22882 & 0.00981 & 0.13338 \\
\bottomrule
\end{tabular}}
\end{table}

\section{Gallery}\label{gallery_appendix}
Here we exemplify the benefit of \textbf{geometry memory injection} through graphics across different datasets.  
\subsection{LDC-NS}

\begin{figure}[H]
\centering
\includegraphics[width=1.0\linewidth]{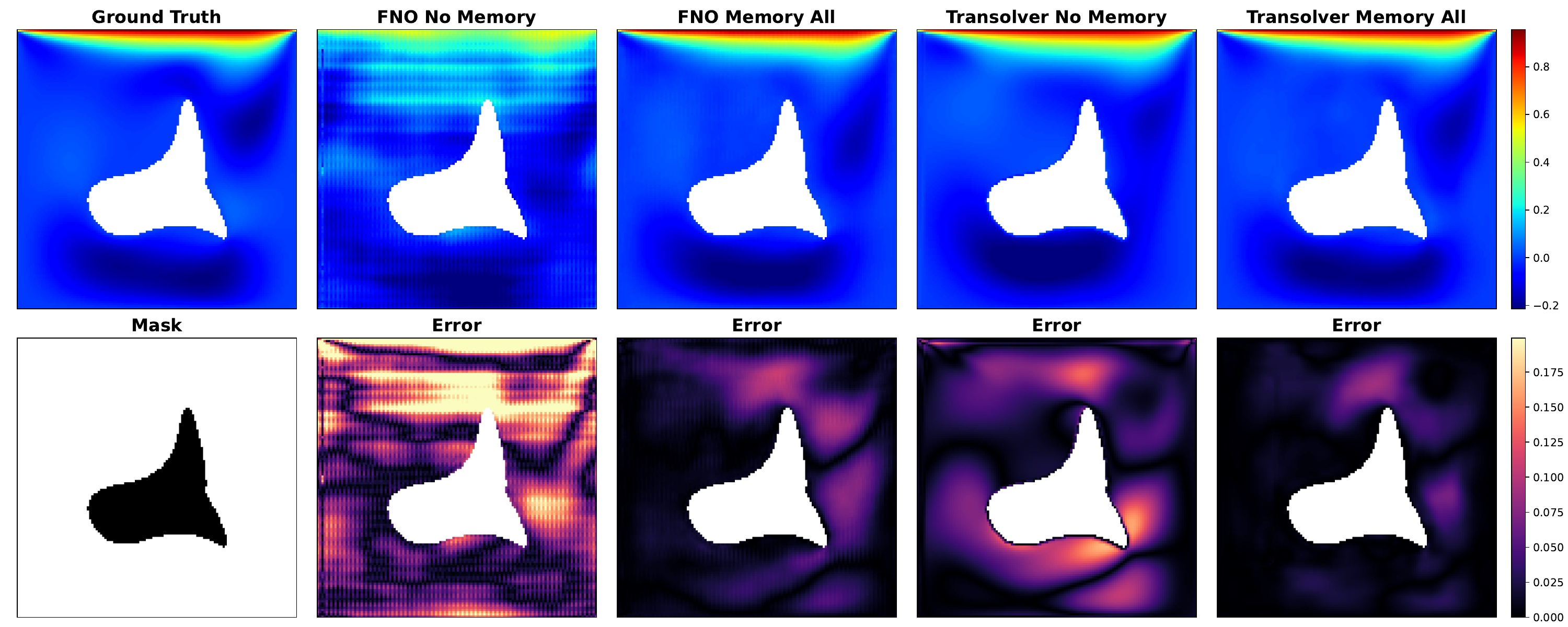}
\caption{Visualization of LDC-NS velocity in horizontal direction}
\label{LDC_NS_x_appendix}
\end{figure}

\begin{figure}[H]
\centering
\includegraphics[width=1.0\linewidth]{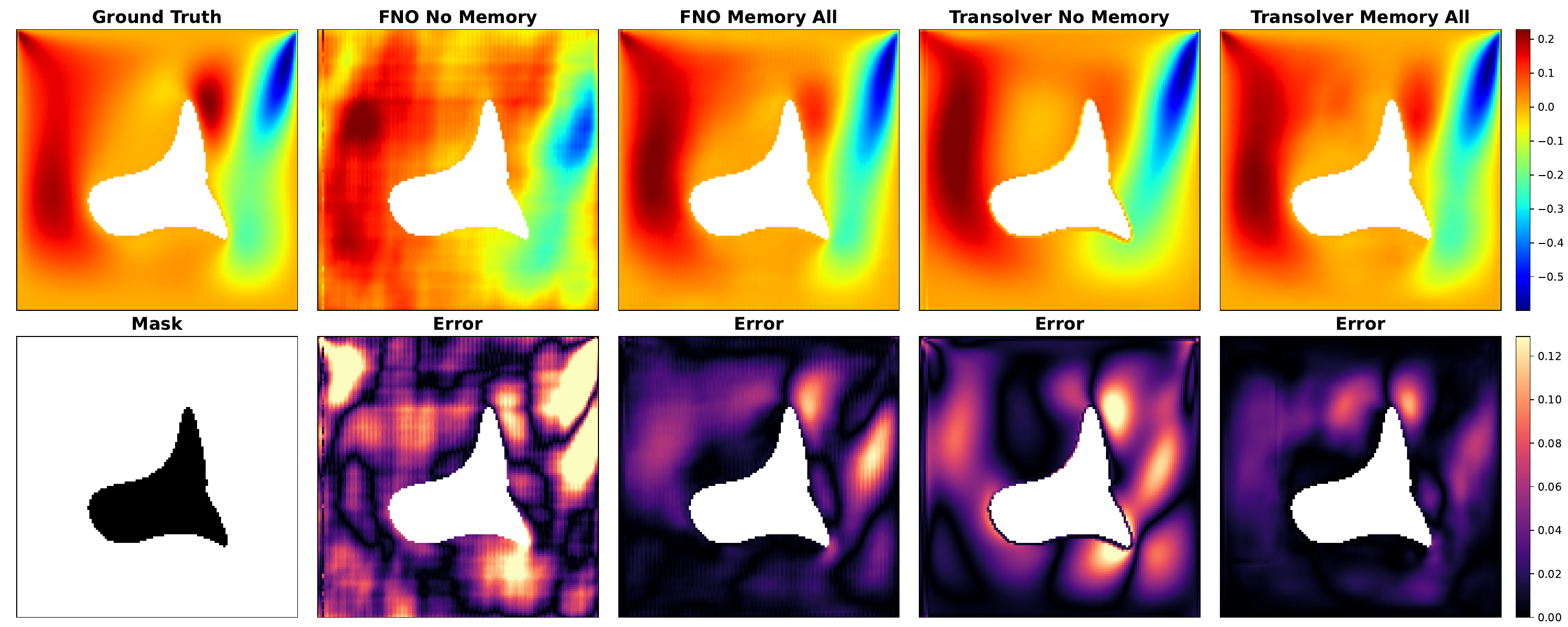}
\caption{Visualization of LDC-NS velocity in vertical direction}
\label{LDC_NS_y_appendix}
\end{figure}

\begin{figure}[H]
\centering
\includegraphics[width=1.0\linewidth]{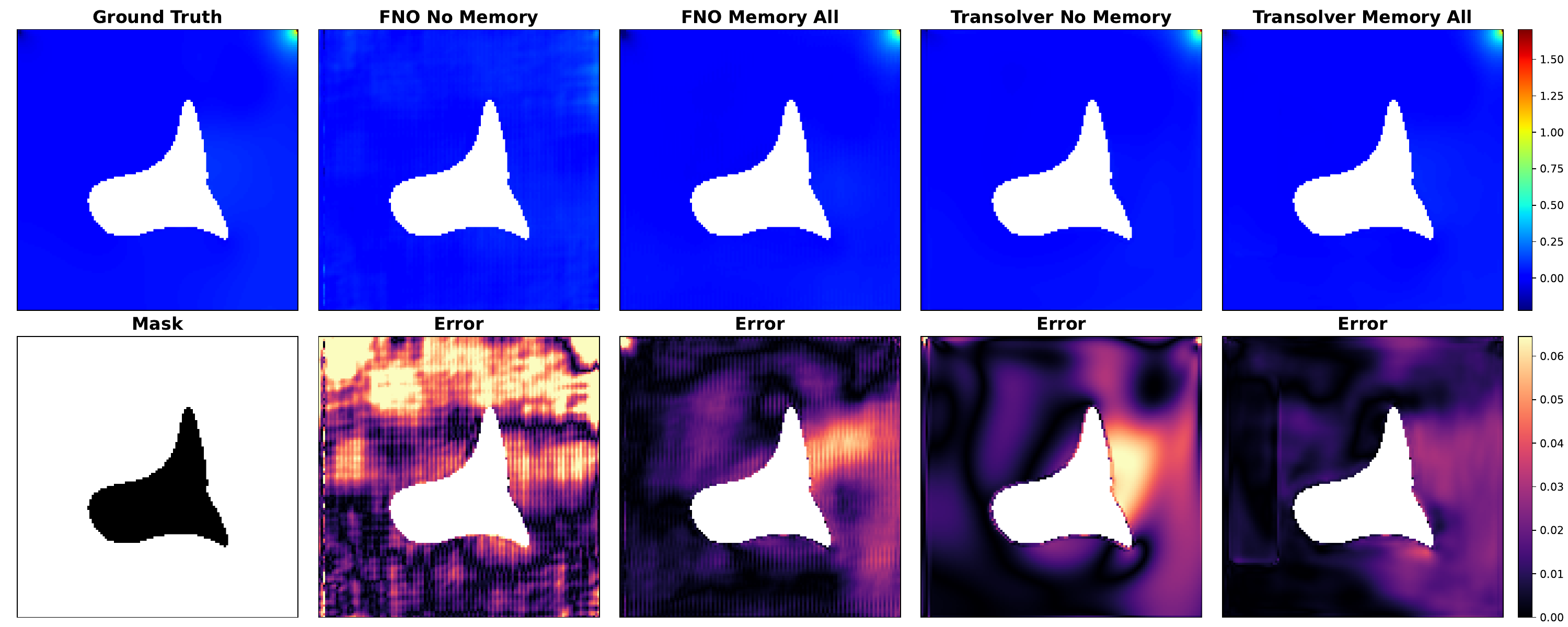}
\caption{Visualization of LDC-NS pressure}
\label{LDC_NS_p_appendix}
\end{figure}

\subsection{LDC-NSHT}

\begin{figure}[H]
\centering
\includegraphics[width=1.0\linewidth]{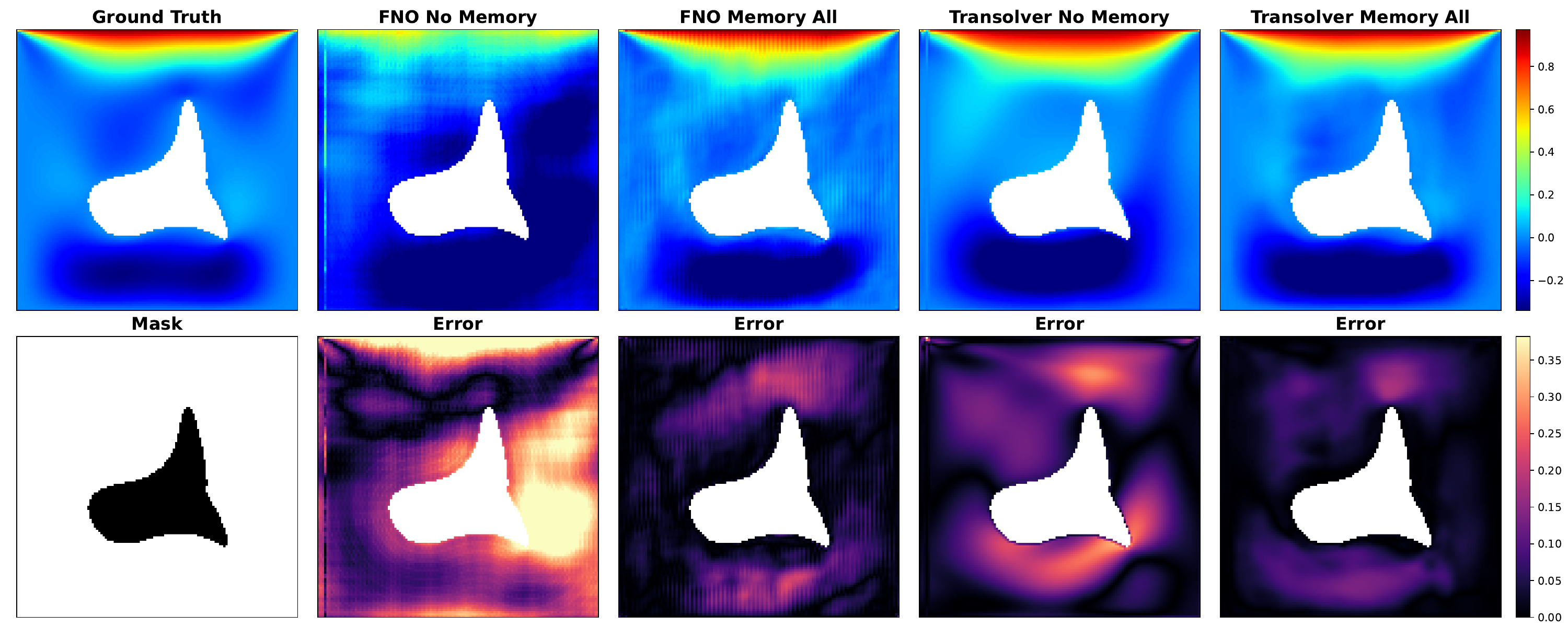}
\caption{Visualization of LDC-NSHT velocity in horizontal direction.}
\label{LDC_NSHT_x_appendix}
\end{figure}

\begin{figure}[H]
\centering
\includegraphics[width=1.0\linewidth]{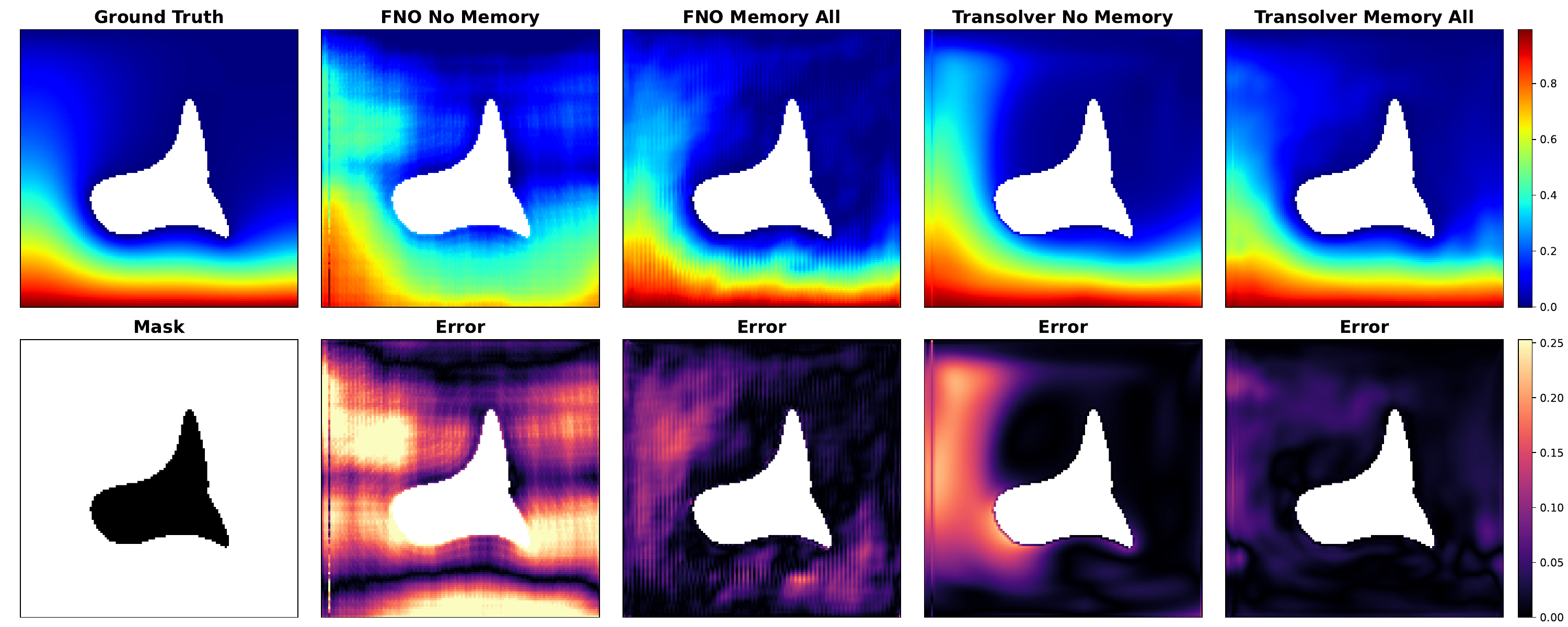}
\caption{Visualization of LDC-NSHT pressure.}
\label{LDC_NSHT_p_appendix}
\end{figure}

\subsection{Darcy}
\begin{figure}[H]
\centering
\includegraphics[width=1.0\linewidth]{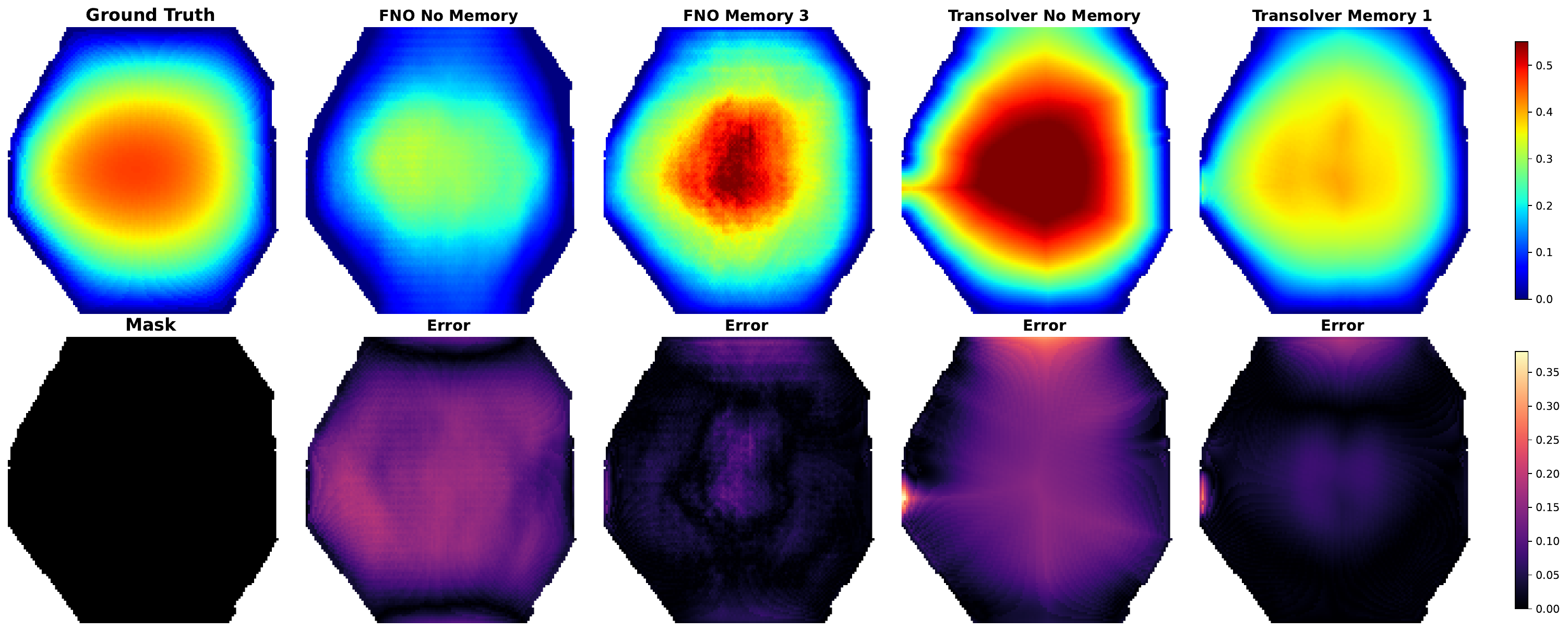}
\caption{Visualization of Darcy result}
\label{darcy_appendix_fig}
\end{figure}

\subsection{AirfRANS}

\begin{figure}[H]
\centering
\includegraphics[width=1.0\linewidth]{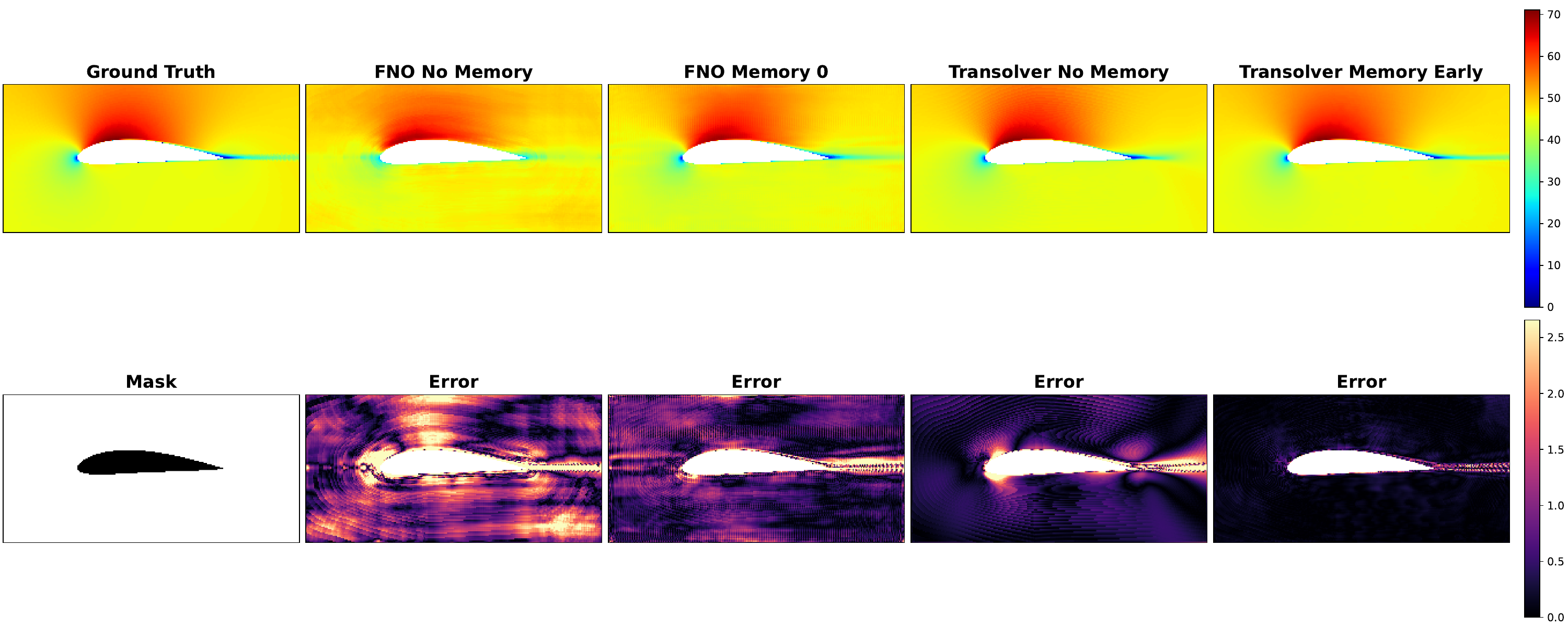}
\caption{Visualization of AirfRANS velocity in horizontal direction}
\label{airfrans_x_appendix}
\end{figure}

\begin{figure}[H]
\centering
\includegraphics[width=1.0\linewidth]{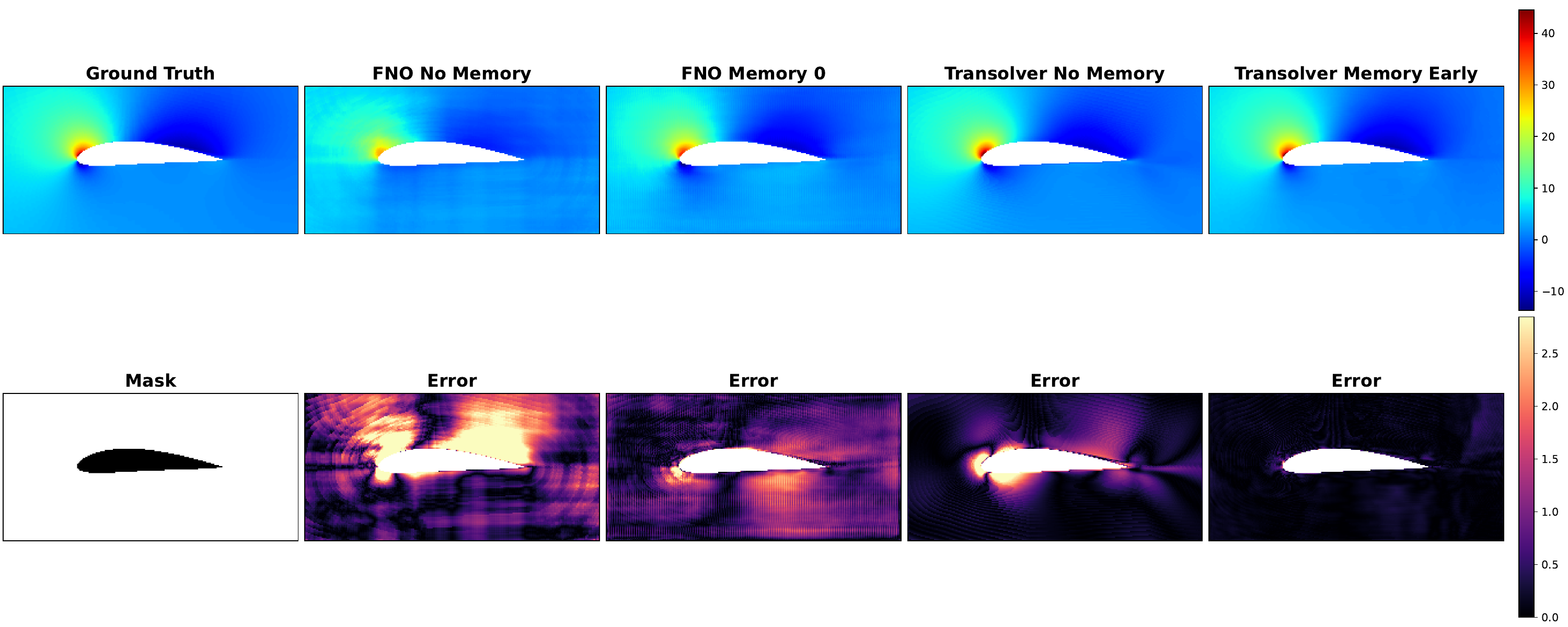}
\caption{Visualization of AirfRANS velocity in vertical direction}
\label{airfrans_y_appendix}
\end{figure}

\begin{figure}[H]
\centering
\includegraphics[width=1.0\linewidth]{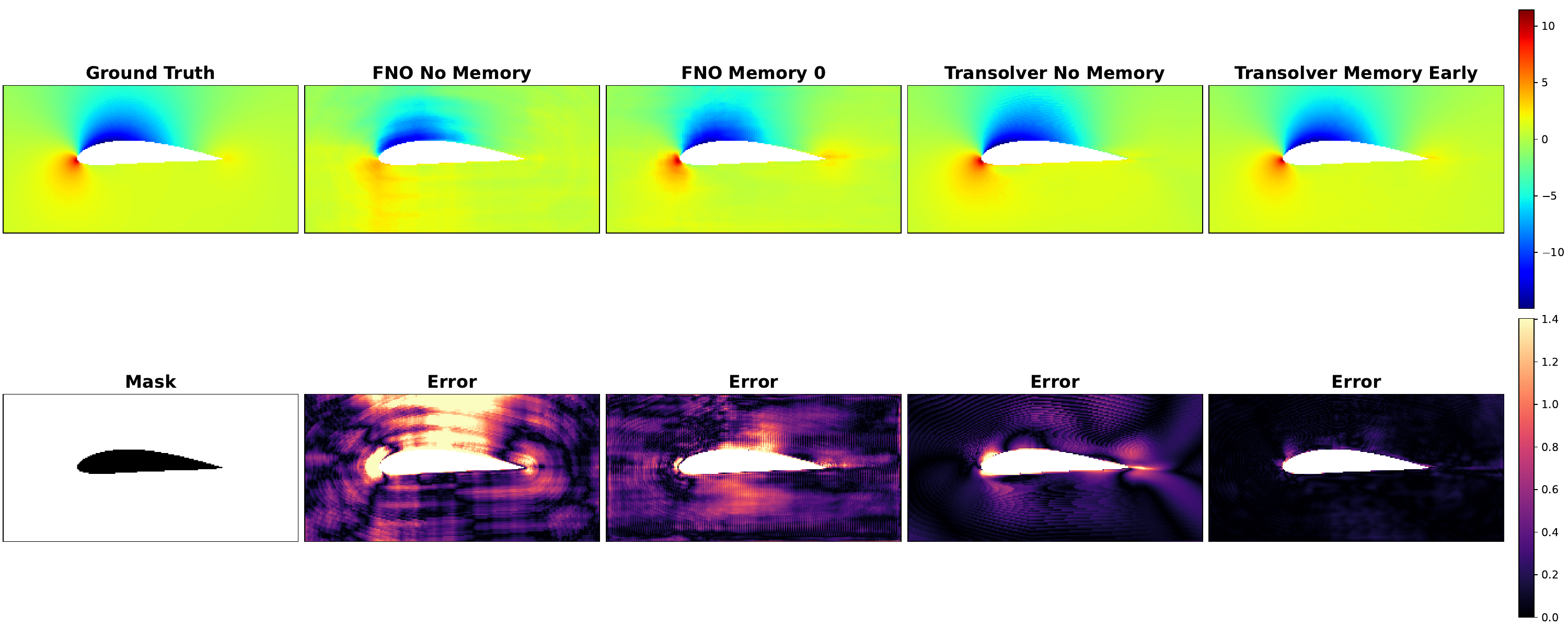}
\caption{Visualization of AirfRANS pressure}
\label{airfrans_p_appendix}
\end{figure}


\end{document}